%% file: main.tex
\documentclass[11pt]{article}

\usepackage[final]{acl}

\usepackage{natbib}
\usepackage{times}
\usepackage{latexsym}

\usepackage[T1]{fontenc}

\usepackage[utf8]{inputenc}

\usepackage{microtype}
\usepackage{inconsolata}
\usepackage{macro}
\usepackage{graphicx}
\usepackage{amsmath}
\usepackage{times}
\usepackage{latexsym}
\usepackage{makecell}
\usepackage{balance}
\usepackage[T1]{fontenc}

\usepackage[utf8]{inputenc}

\usepackage{microtype}

\usepackage{inconsolata}

\usepackage{graphicx}
\usepackage{balance}
\usepackage{caption}
\usepackage{subcaption}
\usepackage{booktabs}
\usepackage{algorithm,algorithmicx,algpseudocode}
\usepackage{enumitem}
\usepackage{multirow}

\makeatletter
\def\thanksnosymbol#1{\protected@xdef\@thanks{\@thanks
        \protect\footnotetext{#1}}}
\makeatother
\usepackage{hyperref}

\title{Long-Context Demonstration Selection Using State Space Models}

\author{Ziniu Zhang \\
  Northeastern University \\
  \texttt{zhang.zini@northeastern.edu} \\\And
  Zhenshuo Zhang \\
  Northeastern University \\
  \texttt{zhang.zhens@northeastern.edu} \\\AND
  Ruoxuan Xiong \\
  Emory University \\
  \texttt{ruoxuan.xiong@emory.edu} \\\And
  Gene Cooperman \\
  Northeastern University\\
  \texttt{gene@ccs.neu.edu} \\\And
  Hongyang R. Zhang \\
  Northeastern University \\
  \texttt{ho.zhang@northeastern.edu}}

\begin{document}
\maketitle

\input{intro}

\input{content}

\bibliography{./bibliography/ref}
\clearpage

\appendix
\input{appendix}

\end{document}

%% file: intro.tex
\begin{abstract}
    We study the problem of demonstration selection, which involves selecting a subset of examples for prepending to a query to a language model. This problem is closely related to in-context learning and language model inference. Since the inference cost of a transformer model scales quadratically with sequence length, the selection problem becomes especially challenging in a long-context scenario. In this paper, we tackle this problem by building on state space models (SSMs), which require only linear inference time given the input. Our approach involves two algorithms. The first learns a small set of SSMs through \emph{distillation} of a (trained) transformer model. We partition all the layers into consecutive groups. Then for each group, we estimate a separate state space model to replicate the input-output behavior within the adjacent layers. Second, we map the distilled model outputs to a small set of tokens, and apply these embeddings for demonstration selection in downstream applications. We perform extensive experiments in both synthetic and real-world datasets to validate our approach. We demonstrate that the distilled SSMs only incur an approximation error of less than $0.7\%$ relative to the true output. In downstream evaluation, we show that on several text classification and reasoning tasks, our approach reduces FLOPs by $14.2\times$ and improves accuracy by $6.48\%$ relative to baseline demonstration selection methods.
\end{abstract}

\section{Introduction}

Language models increasingly rely on answering to long-context prompts at inference time~\citep{oncescu2025flash}. One approach is to condition the model on demonstrations drawn from historical trajectories \cite{xiong2025automated}. Specifically, given a query, the goal is to select a subset of $k$ demonstrations from a large candidate pool of size $n$. This problem is known as \textit{demonstration selection}, which is closely related to in-context learning \cite{garg2022can,zhang2025linear}.

\begin{figure*}[t]
    \centering
    \includegraphics[width=0.996\textwidth]{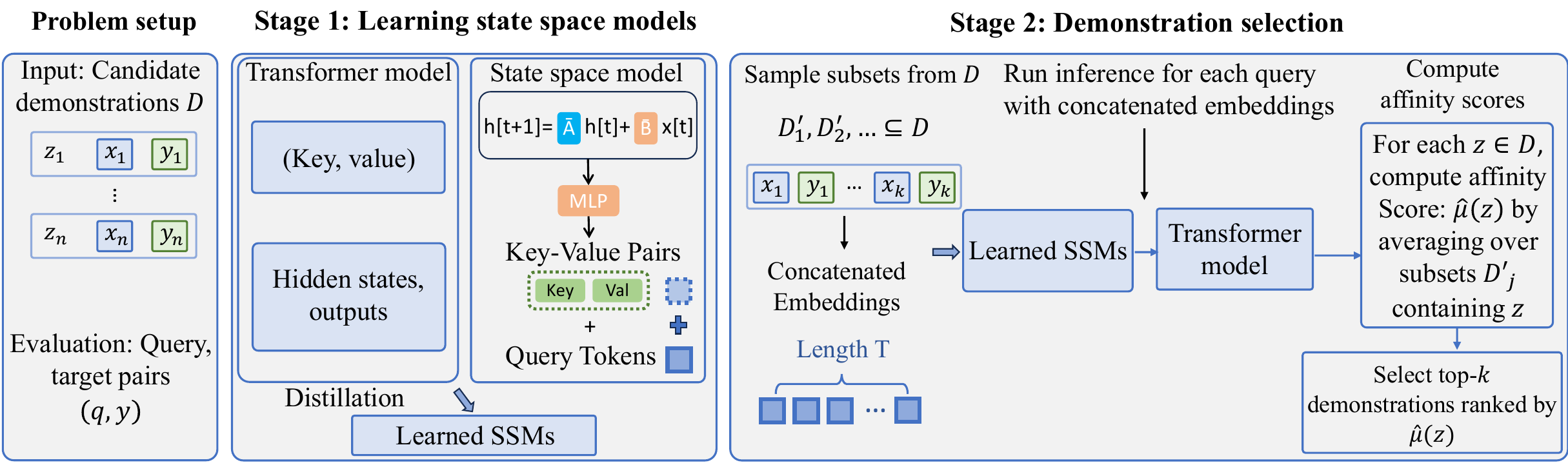}
    \caption{Overview of our approach: Given a set of candidate demonstrations and query-target pairs from an unknown distribution, our approach learns a set of state space models through distillation from a pretrained model. Then, we use the extracted embeddings from the state space models for demonstration selection. We sample multiple subsets of demonstrations and evaluate the concatenated subset embeddings. We aggregate the subset losses to an affinity score for every demonstration and select the top-$k$ demonstrations ranked by the scores.}
    \label{fig_lcc_workflow}
\end{figure*}

The challenge in selecting demonstrations is that the effect of one demonstration often depends on which other demonstrations appear with it in the prompt. Evaluating candidate subsets often requires repeatedly running the language model on many different prompts~\citep{zhang2025linear}. For transformer models, the cost of processing long prompts grows {quadratically} with the total context length $T$, where $T$ scales with both the number and length of demonstrations. This creates a computational bottleneck for long-context demonstration selection, since exhaustively evaluating many candidate subsets is infeasible as $k$ increases.

This challenge connects demonstration selection to a broader problem of efficient long-context inference. Existing approaches largely fall into three categories. First, state space models (SSMs)~\citep{gu2020hippo,gu2022efficiently,gu2024mamba} and related hybrid architectures~\citep{ren2025samba,oncescu2025flash} achieve efficient sequence modeling, but typically require retraining the underlying language model. Second, training-free methods accelerate inference by restricting attention to local windows or selected tokens~\citep{xiao2024efficient,xiao2025efficient}. Third, other related methods~\citep{mu2023learning,chevalier2023adapting,ge2024context} require processing the entire prompt.

In this paper, we build on state space models to scale up demonstration selection to long-contexts at inference time. The key idea is to train a group of SSMs to map each candidate subset's concatenated demonstrations into a small set of tokens. These tokens are computed once per subset and reused across all queries evaluated with that subset. This substantially reduces the cost of evaluating many candidate demonstration subsets. Our algorithm consists of three main components. 

First, we learn a small number of state space models through distillation. We partition the transformer layers into disjoint groups of consecutive layers and assign one SSM to each group. Each SSM scans the embeddings of the long demonstration prompt into a compact hidden representation, which is then mapped into a small set of key-value states. The output key-value states are then combined together as the tokens that transformers can process directly. These tokens serve as a substitute for the original long prompt during inference.

Second, we design a demonstration selection algorithm based on subset sampling \cite{li2023identification,li2023boosting,li2024scalable,li2024gradex,zhang2025linear}. Crucially, this algorithm now runs on top of the (distilled) state space models. We repeatedly sample candidate subsets and evaluate their predictive performance using the representation produced by our approach. We then estimate the contribution of each demonstration based on how subsets containing that demonstration perform.

Finally, we conduct empirical analyses of the proposed framework. We find that our approach preserves the predictive distribution of the original model with relative error below $0.6\%$. Across classification and reasoning benchmarks, our approach reduces FLOPs by up to $14.2\times$ and improves downstream accuracy by $6.48\%$ over baselines.

In summary, we design an algorithm to scale up demonstration selection to long-context inference. First, we build SSMs that map long demonstration prompts into a small set of reusable tokens. Second, we develop an affinity-based subset selection algorithm that leverages this representation to evaluate many candidate demonstration subsets. Third, we validate the proposed algorithms on synthetic, classification, and reasoning benchmarks. The code to reproduce our experiments is available at \href{https://github.com/VirtuosoResearch/Long-context-demonstration-selection}{\url{https://github.com/VirtuosoResearch/Long-context-demonstration-selection}}.

%% file: content.tex
\section{Preliminaries}\label{sec_setup}

We study demonstration selection in constructing a prompt. Let $D$ denote the set of candidate demonstrations of size $n$. Let $(q, y)\sim\mathcal{P}$ denote a query-target pair drawn from an unknown distribution. We select a subset $D' \subseteq D$ containing $k$ demonstrations with a fixed ordering for every query. Given a model $f_W$ and a loss function $\ell: \cX \times \cY \rightarrow \real$ supported on an input domain $\cX$ and output domain $\cY$, the \emph{demonstration (subset) selection problem} is defined as the following minimization:
\begin{equation*}
    \min_{\substack{D' \subseteq D:\, \abs{D'} = k}} \mathbb{E}_{(q,y)\sim\mathcal{P}} \left[ \ell\!\left(f_W(D',q),y\right) \right].
\end{equation*}

For the selected demonstrations $D'$, we concatenated them as the input demonstration prompt. Let $X = [x_1, \dots, x_T]^\top \in \mathbb{R}^{T \times H}$
denote the token embeddings of the prompt, where $T$ is the total number of tokens and $H$ is the embedding dimension. Since $T$ increases with both the number and length of demonstrations, long-context prompting can produce very large input sequences. In a transformer model, the computational cost of processing the prompt grows quadratically with $T$ because self-attention computes pairwise interactions between all tokens. As a result, evaluating the inference outcome for a large $T$ is computationally expensive.

A natural approach for processing long prompts is to use state space models (SSMs), which summarize the input sequence through recurrent updates. Given the prompt embeddings $X$, a discrete SSM updates its state at each token position $t$ as
\begin{equation}
    h_t=\bar{A}h_{t-1}+\bar{B}x_t,
\end{equation}
where $h_t\in\mathbb{R}^{N}$ is the $N$-dimensional state, $\bar{A}\in\mathbb{R}^{N\times N}$ is the state-transition matrix, and $\bar{B}\in\mathbb{R}^{N\times H}$ maps the token embedding $x_t$ into the state space. The SSM processes each token once and has linear complexity in the sequence length $T$ \cite{gu2022efficiently}. 

Previous work has shown that the HiPPO-LegS matrix provides a structured construction of the state-transition matrix by maintaining an online polynomial projection of the input history~\citep{gu2020hippo,gu2024mamba}. The state approximates the coefficients of past inputs under a scaled Legendre basis. Lower-order coefficients capture the main structure of the sequence history, and higher-order coefficients represent finer variations. Thus, a finite-dimensional HiPPO state provides a compact summary of a long input sequence.

In the in-context learning setting, we hypothesize that task-relevant information in a large demonstration set can be preserved by low-order Legendre basis components of the HiPPO matrix. The SSM scans the complete input once. Thus, it can replace a transformer model's inference for the demonstrations and reduce the inference complexity from ${O}(T^2)$ to ${O}(T)$.

Motivated by these existing results, a natural question is whether one could adapt SSMs to tackle the demonstration subset selection problem.
\begin{itemize}
    \item First, how can we design the SSM for the in-context learning setting?
    \item Second, how can we utilize the SSM for demonstration selection?
\end{itemize}
In the next section, we design algorithms to answer each of the above questions, respectively.

\section{Our Approach}\label{sec_lcc}

We build SSMs for efficient long-context inference. First, we present the architecture of the SSMs with theoretical and empirical evidence supporting our hypothesis. We also introduce a distillation procedure for training the SSMs. Then, we propose a demonstration selection method on the embeddings and use controlled synthetic experiments to evaluate its affinity estimates.

\subsection{Learning the State Space Models}\label{sec_glcc}

We design SSMs with the HiPPO-LegS matrix to map long demonstration prompts while preserving the dominant low-order structure of the sequence. 
In a transformer, the prompt is first processed to construct a layer-wise key-value (KV) cache, where each transformer layer stores key and value representations for all prompt tokens. These cached representations are later used during decoding.

Instead of computing the exact KV cache for the full prompt, we use SSMs to process it. The SSM sequentially scans the prompt and summarizes the long sequence into a compact hidden representation. We partition the $m$ layers of the transformer model $f$ into $g$ disjoint groups of consecutive layers to yield a lower-dimensional target for each group. Each group is assigned an independent SSM that predicts the KV states only for its corresponding group of layers.

Specifically, for each layer group $G_i$ for $i = 1,\dots,g$, the SSM processes the prompt embeddings $X$ and produces a final hidden state $h_T^{(i)} \in \mathbb{R}^N$. A multilayer perceptron $f_W^{(i)}$ then projects $h_T^{(i)}$ into a separate pair of key-value states for every transformer layer in the group:
\begin{equation}
    \left\{
        \bigl(\tilde{K}^{(l)},\tilde{V}^{(l)}\bigr)
    \right\}_{l\in G_i}
    =
    f_W^{(i)}(h_T^{(i)}(X)). \notag
\end{equation}
For each transformer layer $l$, the original KV states have shape
${T\times n_{\mathrm{kv}}\times d}$, while the projected states have shape ${n_v\times n_{\mathrm{kv}}\times d}$, where $n_{\mathrm{kv}}$ is the number of key-value heads and $d$ is the dimension of each head. The KV states generated from the outputs of all $g$ SSMs are aligned at the same $n_v$ token positions and assembled across layer groups to form the layer-wise KV cache of these $n_v$ tokens. Thus, the projection preserves the key-value head and head-dimension axes while mapping $T$ tokens to $n_v$ tokens for a transformer to process directly.

To analyze the proposed architecture, we first run the transformer on the complete input prompt using a standard full-attention forward pass and record the resulting key-value states at every layer. We represent the exact transformer KV cache of each layer group $G_i$ by a matrix $C^{(i)}$. This matrix stacks the key and value states from all layers and attention heads in the group, and retains the $T$ token positions as its rows. 

\begin{proposition}
\label{prop_compression}
Let $\{\phi_r\}_{r=1}^{T}$ be an orthonormal discrete Legendre basis along the token dimension, so that $\phi_r^\top\phi_{r'}=\mathbbm{1}_{r=r'}$. For each $G_i$, let $C^{(i)}\in\mathbb{R}^{T\times D_i}$, where $D_i$ is the number of stacked cache features, and $c_r^{(i)}:=\bigl(C^{(i)}\bigr)^\top\phi_r\in\mathbb{R}^{D_i}, r=1,\ldots,T$.
The orthogonal decomposition of $C^{(i)}$ is $\sum_{r=1}^{T}\phi_r\bigl(c_r^{(i)}\bigr)^\top$.
For $N<T$, define the orthogonal projection matrix of rank $N$ onto $\operatorname{span}\{\phi_1,\ldots,\phi_N\}$ by
$\Pi_N:=\sum_{r=1}^{N}\phi_r\phi_r^\top\in\mathbb{R}^{T\times T}.$
It follows that $\Pi_N C^{(i)}=\sum_{r=1}^{N}\phi_r\bigl(c_r^{(i)}\bigr)^\top$ and $C^{(i)}-\Pi_N C^{(i)}=\sum_{r=N+1}^{T}\phi_r\bigl(c_r^{(i)}\bigr)^\top$.
Therefore, for any $s>0$,
\begin{align*}
    \left\|C^{(i)}-\Pi_N C^{(i)}\right\|_F^2
    \le \sum_{r=N+1}^{T} \frac{r^{2s}}{(N+1)^{2s}} \left\|c_r^{(i)}\right\|_2^2.
\end{align*}
\end{proposition}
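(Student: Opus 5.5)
The argument is elementary linear algebra on the orthonormal basis $\{\phi_r\}_{r=1}^T$, followed by a termwise weighting inequality. First I will verify the decomposition. Since $\{\phi_r\}_{r=1}^T$ is an orthonormal basis of $\mathbb{R}^T$, we have $\sum_{r=1}^T\phi_r\phi_r^\top = I_T$. Hence
\[
C^{(i)} = \sum_{r=1}^T \phi_r\phi_r^\top C^{(i)} = \sum_{r=1}^T \phi_r \bigl(c_r^{(i)}\bigr)^\top,
\]
using $\phi_r^\top C^{(i)} = \bigl(c_r^{(i)}\bigr)^\top$. Applying $\Pi_N$ and using $\phi_{r'}^\top\phi_r=\mathbbm{1}_{r=r'}$ kills every term with $r>N$ and leaves the terms with $r\le N$ unchanged. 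This gives the stated formula for $\Pi_N C^{(i)}$, and subtracting it gives the formula for the residual $C^{(i)}-\Pi_N C^{(i)}$.

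Next I will compute the Frobenius norm of the residual exactly. Write $R=\sum_{r=N+1}^T\phi_r\bigl(c_r^{(i)}\bigr)^\top$. Then
\[
\|R\|_F^2=\operatorname{tr}(R^\top R)=\sum_{r,r'>N} c_r^{(i)}\,\phi_r^\top\phi_{r'}\,\bigl(c_{r'}^{(i)}\bigr)^\top .
\]
Orthonormality removes the cross terms, so the trace reduces to $\sum_{r=N+1}^T \|c_r^{(i)}\|_2^2$. This is a Parseval identity for the tail coefficients.

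Finally, for each $r\ge N+1$ and any $s>0$ we have $r/(N+1)\ge 1$, and hence $r^{2s}/(N+1)^{2s}\ge 1$. Multiplying each nonnegative summand $\|c_r^{(i)}\|_2^2$ by this factor can only increase it, which yields the claimed inequality.

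No step is a real obstacle. The only care needed is the bookkeeping of the transpose convention $c_r^{(i)}=(C^{(i)})^\top\phi_r$, so that the rank-one terms $\phi_r (c_r^{(i)})^\top$ have shape $T\times D_i$, and the use of orthonormality to eliminate the cross terms in the trace. The weighted form is then useful because, if the Legendre coefficients have bounded Sobolev-type energy $\sum_r r^{2s}\|c_r^{(i)}\|_2^2\le M$, it implies a tail error of at most $M/(N+1)^{2s}$.
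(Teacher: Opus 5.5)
Your proof is correct and follows the same route as the paper's. You expand $C^{(i)}$ in the orthonormal basis, identify the residual as the tail sum, use orthonormality to obtain the exact identity $\|C^{(i)}-\Pi_N C^{(i)}\|_F^2=\sum_{r=N+1}^{T}\|c_r^{(i)}\|_2^2$, and then bound each term using $r^{2s}/(N+1)^{2s}\ge 1$. Your justification of the expansion via $\sum_{r=1}^{T}\phi_r\phi_r^\top=I_T$ and your explicit trace computation make precise two steps the paper only asserts.
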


This shows that the error stems from Legendre components omitted by the first $N$ modes. When the coefficients are concentrated at low frequencies, this tail is small. Increasing the SSM state dimension $N$ retains more components and further reduces the approximation error. The proof is provided in Appendix~\ref{app_group_error}.

\begin{algorithm}[t!]
\caption{Learning SSMs via Key, Value, and Output Distillation}
\label{alg_lcc}
\textbf{Input}: Embedding and query pairs $\{X,q\}$, parameter initialization $\Theta=\{\bar B^{(i)}, W^{(i)}\}_{i=1}^g$ \\
\textbf{Require}: Groups $\{G_i\}_{i=1}^{g}$, $g$ SSMs each with a fixed $\bar A$ and variable weight matrix $\bar B^{(i)}$ and MLPs $\{f_W^{(i)}\}_{i=1}^g$, an $m$-layer transformer, $n_v$, parameters $\lambda_1,\lambda_2$, and learning rate $\eta$ \\
\textbf{Output}: Trained parameters
$\hat \Theta$
\begin{algorithmic}[1]
\For{each $(X,q)$}
    \State Extract $\{K^{(l)},V^{(l)}\}_{l=1}^{m}$ from transformer
    \State $T\leftarrow$ Length of $X$
    \For{$i\in \{1,\ldots,g\}$}
        \State
        $\{(\tilde K^{(l)},\tilde V^{(l)})\}_{l\in G_i}
        \leftarrow f_W^{(i)}(h_T^{(i)}(X))$
    \EndFor
    \For{$i \in \set{1,...,n_v}$, $l \in \set{1, \dots, m}$}
        \State $\kappa_1^{(i,l)}\leftarrow 1-\cos(\tilde{K}^{(l)}_i,\operatorname{Pool}(K^{(l)})_i)$
        \State $\kappa_2^{(i,l)} \leftarrow 1-\cos(\tilde{V}^{(l)}_i,\operatorname{Pool}(V^{(l)})_i)$
    \EndFor
    \State $\Theta\leftarrow
    \Theta-\eta\nabla_\Theta \big(\frac{1}{2m}\sum_{l=1}^{m}\sum_{i=1}^{n_v} ( \kappa_1^{(i,l)} + \kappa_2^{(i,l)} )\big)$ \hfill \textcolor{gray}{\textit{// Key-value distillation}}
\EndFor
\For{each $(X,q)$}
    \For{$i\in \{1,\ldots,g\}$}
        \State
        $\{(\tilde K^{(l)},\tilde V^{(l)})\}_{l\in G_i}
        \leftarrow f_W^{(i)}(h_T^{(i)}(X))$
    \EndFor
    \State $(P_1,\{\psi_1^{(l)}\})\leftarrow f((\tilde K,\tilde V),q)$
    \State $(P_2,\{\psi_2^{(l)}\})\leftarrow f(X,q)$
    \State $L \leftarrow
    \lambda_1D_{\mathrm{KL}}(P_1\|P_2)
    +
    \frac{\lambda_2}{m}\sum_{l=1}^{m}
    ( 1-\cos(\psi_1^{(l)},\psi_2^{(l)}) )$
    \State $\Theta\leftarrow
    \Theta-\eta\nabla_\Theta L$ \hfill \textcolor{gray}{\textit{// Output distillation}}
\EndFor
\State \textbf{return} $\hat \Theta$
\end{algorithmic}
\end{algorithm}

We empirically verify this on Qwen-7B-Instruct and Llama-3-8B-Instruct models with $G{=}4$ layer groups and $k{=}50$ demonstrations. The rapid coefficient decay in Figure~\ref{fig_legendre_spectrum} indicates that a finite-dimensional HiPPO state can retain most of the information in the KV cache. Based on this observation, the SSMs process the entire $T$-token input using recurrent scans with $O(T)$ cost for a fixed number of groups. The transformer then processes the resulting $n_v$ tokens with a cost of $O(n_v^2)$. Since $n_v$ is fixed and $n_v\ll T$, the total prefix-processing complexity is $O(T+n_v^2)=O(T)$. In the next part, we introduce a two-stage distillation method to train the SSMs to produce KV states that preserve the original transformer function output.

\begin{figure}[t]
\centering
\begin{subfigure}{0.485\linewidth}
\centering
\includegraphics[width=\linewidth]{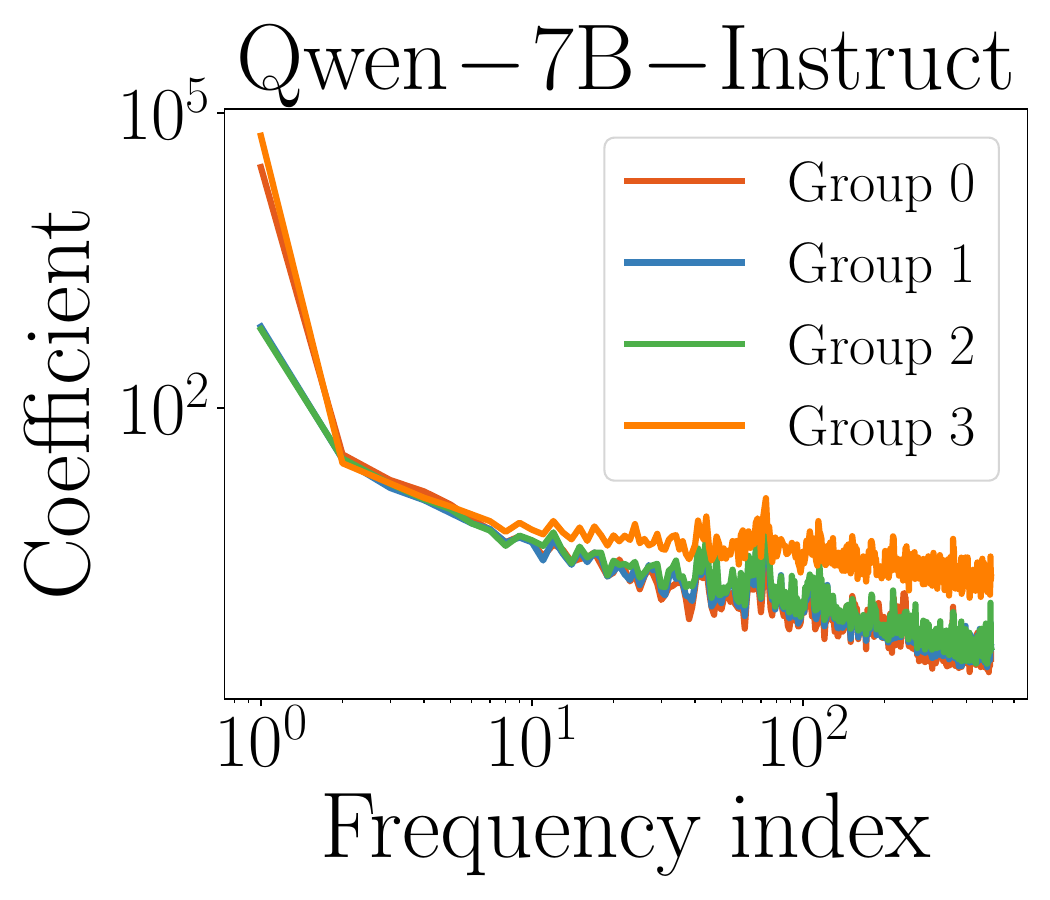}
\end{subfigure}
\hfill
\begin{subfigure}{0.485\linewidth}
\centering
\includegraphics[width=\linewidth]{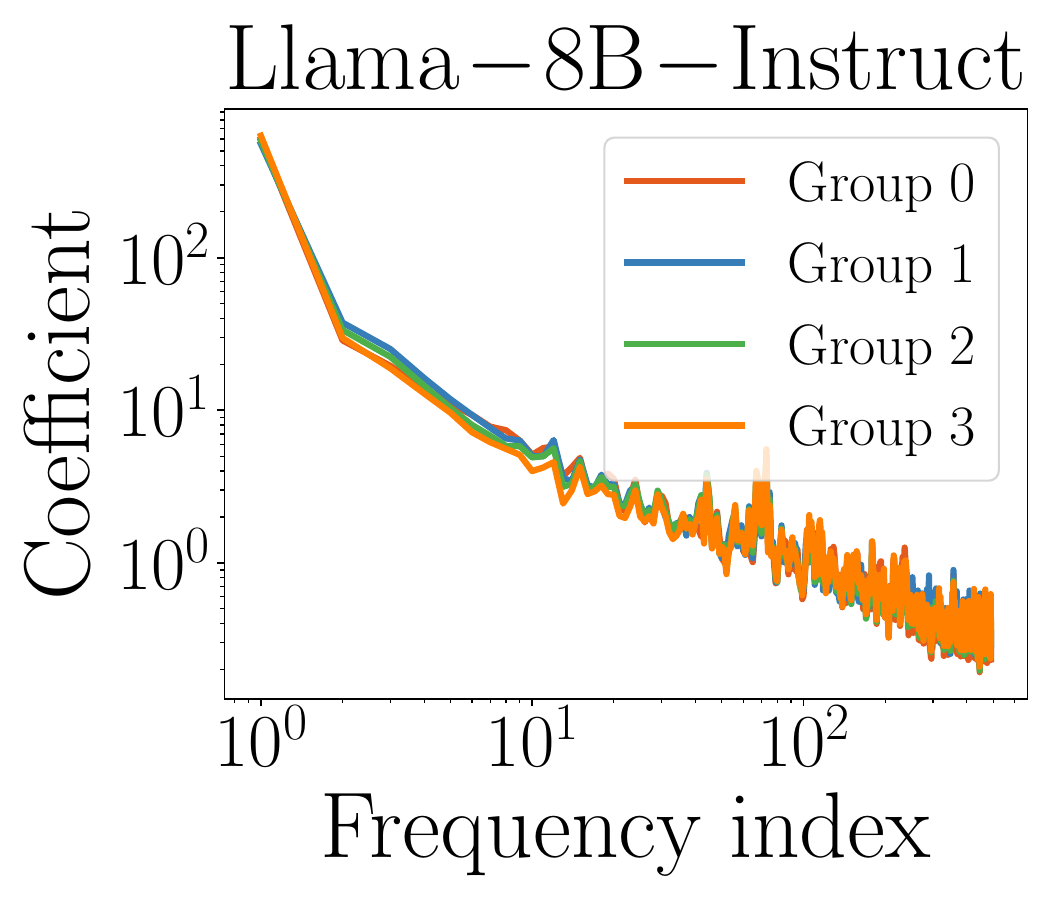}
\end{subfigure}
\caption{Mean squared discrete Legendre coefficients of KV caches on Qwen-7B-Instruct and Llama-3-8B-Instruct with $G{=}4$ layer groups and $k{=}50$ demonstrations. The coefficients decay rapidly across groups, and the KV cache is dominated by low-order HiPPO-LegS components.}
\label{fig_legendre_spectrum}
\end{figure}

\paragraph{Key, value, and output distillation.}%
To train the SSMs, we design a distillation procedure. First, we align the generated tokens with a valid KV space. Given the $m$-layer model $f_W$, we extract its exact KV cache $\{K^{(l)}, V^{(l)}\}_{l=1}^{m}$. To resolve the length mismatch, we apply pooling to the $T$ states, producing $n_v$ pooled targets $\operatorname{Pool}(K)$ and $\operatorname{Pool}(V)$. 
For the $i$-th layer group, we keep the HiPPO-LegS transition matrix fixed and train only the SSM input projection $\bar B^{(i)}$ and the parameters of the KV projection MLP $f^{(i)}$. We denote all trainable parameters by $\Theta = \left\{ \bar B^{(i)},W^{(i)} \right\}_{i=1}^{g}$.
The objective is:
\begin{align}
&\frac{1}{2m}\sum_{l=1}^{m}\sum_{i=1}^{n_v} \Big( \left(1-\cos\!\left( \tilde{K}^{(l)}_i,\operatorname{Pool}(K^{(l)})_i \right)\right) \notag\\
&\qquad
+ \left(1-\cos\!\left( \tilde{V}^{(l)}_i,\operatorname{Pool}(V^{(l)})_i \right)\right) \Big). \notag
\end{align}

\begin{table*}[t]
\centering
{\small
\begin{tabular}{@{}l|l|cccc@{}}
\toprule
Model & \# Demonstrations & SST-2 & Poem Sentiment & Coin Flip & Modular Addition \\ \midrule
\multirow{2}{*}{Qwen-1.5B} & $k=50$ & $2.5_{\pm0.1}\times10^{-3}$ & $1.3_{\pm0.1}\times10^{-3}$ & $1.9_{\pm0.5}\times10^{-3}$ & $1.3_{\pm0.1}\times10^{-3}$ \\
& $k=75$ & $2.8_{\pm0.2}\times10^{-3}$ & $1.9_{\pm0.2}\times10^{-3}$ & $2.1_{\pm0.2}\times10^{-3}$ & $1.8_{\pm0.4}\times10^{-3}$ \\\midrule
\multirow{2}{*}{Qwen-3B} & $k=50$ & $6.1_{\pm0.2}\times10^{-3}$ & $2.3_{\pm0.3}\times10^{-3}$ & $2.1_{\pm0.8}\times10^{-3}$ & $2.5_{\pm0.3}\times10^{-3}$ \\
& $k=75$ & $6.3_{\pm0.2}\times10^{-3}$ & $2.9_{\pm0.2}\times10^{-3}$ & $4.1_{\pm1.1}\times10^{-3}$ & $2.1_{\pm0.7}\times10^{-3}$ \\\midrule
\multirow{2}{*}{Llama-8B} & $k=50$ & $2.3_{\pm0.1}\times10^{-3}$ & $2.7_{\pm0.1}\times10^{-3}$ & $1.5_{\pm0.5}\times10^{-3}$ & $3.8_{\pm1.0}\times10^{-3}$ \\
& $k=75$ & $2.5_{\pm0.2}\times10^{-3}$ & $2.9_{\pm0.2}\times10^{-3}$ & $1.9_{\pm0.2}\times10^{-3}$ & $4.8_{\pm0.9}\times10^{-3}$ \\
\bottomrule
\end{tabular}}
\caption{Relative error of the predictive output between the full demonstration prefix $p$ and the tokens $\hat p$. We vary the number of demonstrations on multiple tasks. Lower values indicate higher fidelity to the original model output. We run three times to compute the mean and standard deviations.}\label{tab_error}
\end{table*}

Next, we train the SSM using joint supervision of outputs and intermediate states to preserve predictive accuracy and positional structure. For a query $q$, let $\Pr[y \mid \tilde{K}, \tilde{V}, q]$ and $\Pr[y \mid X, q]$ denote the output distributions of the LLM conditioned on the SSMs' output and original demonstrations, respectively. We denote these two distributions by $P_1$ and $P_2$, respectively. We additionally align the corresponding query hidden states $\psi_1^{(l)}$ and $\psi_2^{(l)}$ across all layers. The entire loss objective, denoted by $\hat L (\Theta)$, is: 

\begin{align}
\lambda_1 D_{\mathrm{KL}}\!\left(P_1\|P_2\right) \notag
+ \frac{\lambda_2}{m} \sum_{l=1}^{m} \left( 1-\cos\!\left(\psi_1^{(l)},\psi_2^{(l)}\right) \right).\notag
\end{align}
where $\lambda_1$ and $\lambda_2$ adjust the relative weight of the respective terms.
The entire procedure is summarized in Algorithm \ref{alg_lcc}.

To validate the effectiveness of the above distillation procedure, we evaluate Algorithm \ref{alg_lcc} on multiple tasks using $k$ demonstrations from $50$ to $75$. As shown in Table~\ref{tab_error}, the relative error between the logits from the full prefix $p$ and tokens $\hat{p}$ remains strictly below $0.7\%$ across all settings. This confirms that Algorithm~\ref{alg_lcc} robustly preserves the original predictive behavior. We defer the omitted result where $k=25$ to Table~\ref{tab_error_k=25} in Appendix \ref{app_omitted_results}.

\subsection{Demonstration Selection Using SSMs}\label{sec_aslcc}

We now utilize Algorithm \ref{alg_lcc} for demonstration selection. We first introduce a randomized estimator that quantifies the marginal utility of each candidate via subset sampling. We define the affinity score, $\mu(z)$, of a demonstration example $z\in\mathcal{D}$ as
\[ - \mathbb{E}_{\substack{ D'\subseteq{D}, z\in D'}} [\mathbb{E}_{(q,y)\sim\mathcal{P}} [\ell(f_W(D',q),y)]]. \]
Thus, $\mu(z)$ is the negative expected inference loss averaged over subsets of size $k$ that contain $z$.

\begin{algorithm}[t!]
\caption{Demonstration Selection Using SSMs}
\label{alg_aslcc}
\textbf{Input}: Candidate set $D$, validation set $\mathcal V$, subset size $k$, number of sampled subsets $M$, parameter initialization $\Theta$ \\
\textbf{Require}: Transformer model $f$, groups $\{G_i\}_{i=1}^{g}$, HiPPO-LegS matrix $\bar A$ \\
\textbf{Output}: $k$ selected demonstrations
\begin{algorithmic}[1]
\State $\hat\Theta \leftarrow$ Algorithm~\ref{alg_lcc}
\For{$j \in \{1,\ldots,M\}$}
    \State Sample $D_j'\subseteq D$ of size $k$ 
    \State $X\leftarrow$ Embeddings of $D_j'$
    \State $T\leftarrow$ Length of $X$
    \For{$i\in \{1,\ldots,g\}$}
        \State
        $\{(\tilde K^{(l)},\tilde V^{(l)})\}_{l\in G_i}
        \leftarrow f_W^{(i)}(h_T(X))$
    \EndFor
    \State $\tilde C\leftarrow
    \{(\tilde K^{(l)},\tilde V^{(l)})\}_{l=1}^{m}$
    \State $\displaystyle
    \hat L_j\leftarrow
    \frac{1}{|\mathcal V|}
    \sum_{(q,y)\in\mathcal V}
    \ell\!\left(f(\tilde C,q),y\right)$
\EndFor
\For{$z\in D$}
    \State $\displaystyle
    \hat\mu(z)\leftarrow-
    \frac{\sum_{j=1}^{M}
    \mathbf 1_{\{z\in D'_j\}}\hat L_j}
    {\sum_{j=1}^{M}
    \mathbf 1_{\{z\in D'_j\}}}$
\EndFor
\State \textbf{return} Top-$k$ demonstrations according to $\hat\mu(z)$ for all $z \in D$
\end{algorithmic}
\end{algorithm}

To estimate $\mu(z)$, we uniformly sample $M$ subsets $\{D'_1,\dots,D'_M\}$ from $D$ sized $k$. Then, we estimate the affinity, denoted by $\hat{\mu}(z)$, as
\[ -\frac{ \sum_{i=1}^{M} \mathbf{1}_{z\in D'_i} \sum_{(q,y)\in \mathcal{V}} \ell(f_W(D'_i,q),y)}{ \sum_{i=1}^{M} \mathbf{1}_{z\in D'_i}}. \]
It is possible to show that with $M=O(n/(k\epsilon^2))$ sampled subsets, one can reduce the uniform estimation error down to $\epsilon$ \cite{li2023identification,li2023boosting}. Afterwards, we rank the candidates by $\hat{\mu}(z)$ and select the $k$ demonstrations with the highest scores. Taken together, we summarize the entire procedure in Algorithm \ref{alg_aslcc}.

\paragraph{A case study of linear functions.}
To evaluate our algorithm, we adopt the synthetic linear regression setting from \citet{garg2022can} using a two-layer transformer. Both demonstrations and queries are generated via $y_i = \langle \beta^{(c)}, x_i \rangle + \epsilon_i$, where each input $x_i\in\mathbb{R}^{20}$ is independently sampled from the standard Gaussian distribution $\mathcal{N}(0,\id_{20})$. To simulate multiple queries clustering around shared underlying tasks, each coefficient vector $\beta^{(c)}$ is drawn randomly from a set of mutually orthogonal anchors, ensuring strict task decoupling. Finally, we inject independent Gaussian noise $\epsilon_i \sim \mathcal{N}(0, \sigma^2)$ to prevent the model from trivially converging to an exact solution after observing $20$ examples.

We evaluate the predictive performance of our proposed algorithm against two baseline strategies: random demonstration selection (random-$k$) and embedding similarity-based selection (top-$k$). As shown in Figure~\ref{fig_lr_main}, our method achieves the lowest estimation error in the linear setting. This predictive advantage robustly extends to the non-linear setting, which is a two-layer ReLU network (Figure~\ref{fig_relu_main}). Note that the model is trained with $40$ data points as demonstrations, thus the error increases after the number of samples reaches $40$. 

By leveraging the SSMs during subset evaluation, our approach reduces FLOP overhead by $15.7\times$ compared to naive inference (Figure~\ref{fig_lr}), ensuring high scalability for massive contexts.

\begin{figure}[t!]
\centering
\begin{subfigure}{0.49\linewidth}
\centering
\includegraphics[width=0.98\linewidth]{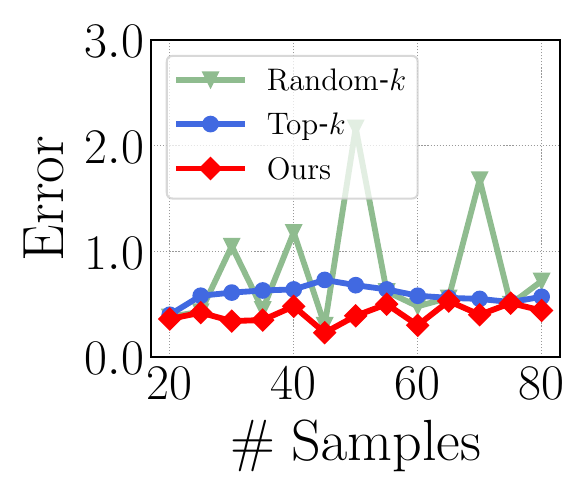}
\caption{Linear Functions}
\label{fig_lr_main}
\end{subfigure}
\hfill
\begin{subfigure}{0.49\linewidth}
\centering
\includegraphics[width=0.98\linewidth]{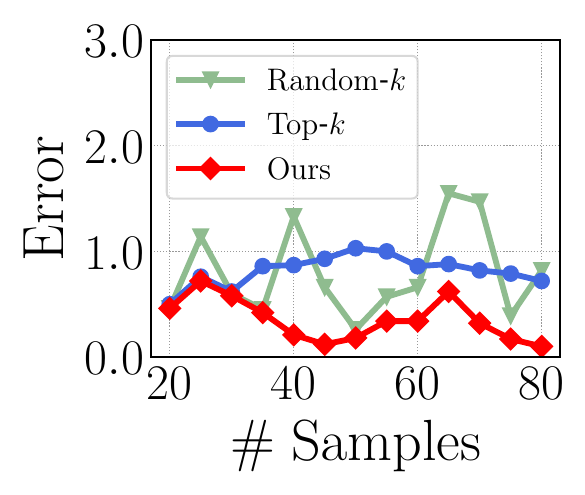}
\caption{Nonlinear Functions}
\label{fig_relu_main}
\end{subfigure}
\begin{subfigure}{0.49\linewidth}
\centering
\includegraphics[width=0.98\linewidth]{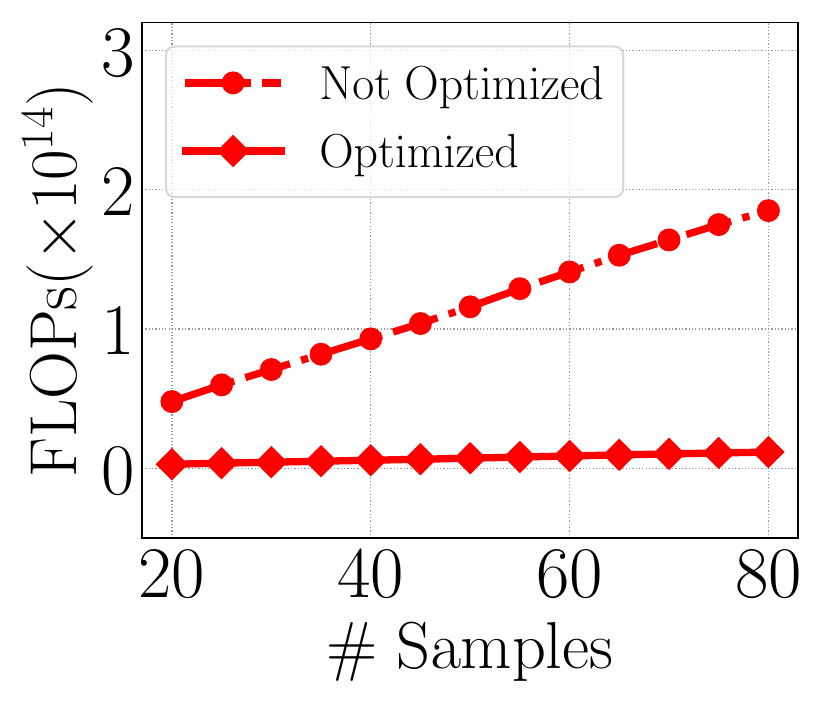}
\caption{Linear Functions}
\end{subfigure}
\begin{subfigure}{0.49\linewidth}
\centering
\includegraphics[width=0.98\linewidth]{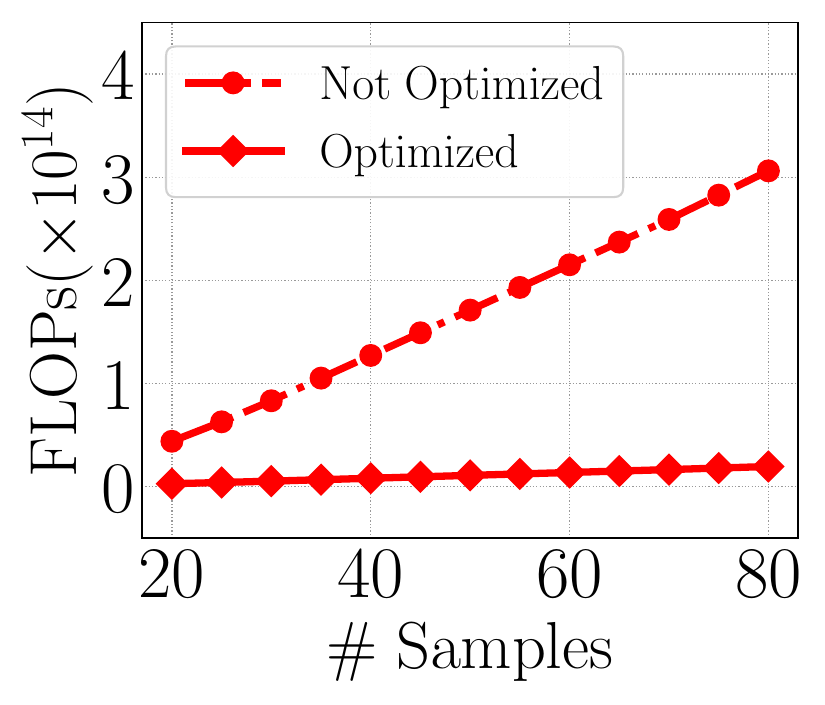}
\caption{Nonlinear Functions}
\end{subfigure}
\caption{Evaluation of our approach for linear and nonlinear functions. The top panel provides a comparison with random-$k$ and top-$k$ selection methods. The lower panel illustrates the computational costs relative to full inference (denoted as not optimized).}
\label{fig_lr}
\end{figure}

\begin{table*}[t]
\centering
\resizebox{0.995\textwidth}{!}
{\small
\begin{tabular}{@{}l|c|ccc|ccc|ccc@{}}
\toprule
\multirow{2}{*}{Method} & Window & \multicolumn{3}{c|}{MMLU} & \multicolumn{3}{c|}{Modular Addition} & \multicolumn{3}{c}{GSM8K} \\ 
& Length & Error & FLOPs & Memory & Error & FLOPs & Memory & Error & FLOPs & Memory \\
\midrule
Dense & - & - & $6.53e^{15}$ & $17.31$G & - & $1.05e^{15}$ & $13.65$G & - & $2.59e^{15}$ & $15.75$G \\
StreamingLLM & $64$ & $1.1\%$ & $7.75e^{13}$ & $15.90$G & $9.2\%$ & $1.90e^{13}$ & $12.95$G & $5.6\%$ & $1.26e^{14}$ & $14.06$G \\
LM-Infinite & $64$ & $1.2\%$ & $7.75e^{13}$ & $15.91$G & $10.9\%$ & $1.91e^{13}$ & $12.97$G & $5.1\%$ & $1.27e^{14}$ & $14.06$G \\
Gist Tokens & $64$ & $7.2\%$ & $1.66e^{15}$ & $14.45$G & $2.9\%$ & $7.50e^{14}$ & $13.72$G & $40.4\%$ & $2.37e^{14}$ & $14.43$G \\
ICAE & $64$ & $16.5\%$ & $3.79e^{14}$ & $17.02$G & $1.4\%$ & $1.36e^{14}$ & $14.27$G & $0.5\%$ & $3.33e^{14}$ & $14.87$G \\
DuoAttention & $64$ & $2.7\%$ & $8.03e^{13}$ & $16.24$G & $7.7\%$ & $1.93e^{13}$ & $13.25$G & $4.9\%$ & $1.31e^{14}$ & $14.25$G \\
I2CL & - & $14.3\%$ & $2.62e^{14}$ & $15.89$G & $0.7\%$ & $8.29e^{14}$ & $13.14$G & $11.7\%$ & $8.56e^{14}$ & $13.86$G \\
BSA & $\sim400$ & $3.1\%$ & $7.86e^{13}$ & $16.53$G & $1.8\%$ & $2.31e^{13}$ & $13.36$G & $2.1\%$ & $1.51e^{14}$ & $14.31$G \\
\midrule
\textbf{Algorithm \ref{alg_lcc}} & $\mathbf{16}$ & $\mathbf{0.2\%}$ & $\mathbf{3.49e^{13}}$ & $\mathbf{16.76}$G & $\mathbf{0.2\%}$ & $\mathbf{2.12e^{13}}$ & $\mathbf{13.27}$G & $\mathbf{0.1\%}$ & $\mathbf{8.67e^{13}}$ & $\mathbf{14.44}$G \\
\bottomrule
\end{tabular}}
\caption{We report the relative error, computational cost (FLOPs), and memory usage on MMLU, Addition datasets, and GSM8K. The relative error is computed against actual inference results.}\label{tab_error_baseline_main}
\end{table*}

\begin{table*}[t]
\centering
\resizebox{0.995\textwidth}{!}
{
\begin{tabular}{@{}l|cccc|cccc|cc@{}}
\toprule
Category & \multicolumn{4}{c|}{Sentiment Analysis} & \multicolumn{4}{c|}{Math Reasoning} & \multicolumn{2}{c}{Graph Reasoning} \\
Dataset & \multicolumn{2}{c}{SST-2} & \multicolumn{2}{c|}{Poem Sentiment} & \multicolumn{2}{c}{Modular Addition} & \multicolumn{2}{c|}{Coin Flip} & \multicolumn{2}{c}{Edge Existence} \\ 
Metric & Acc. & FLOPs & Acc. & FLOPs & Acc. & FLOPs & Acc. & FLOPs & Acc. & FLOPs \\
\midrule
Random-$k$      & $74.7_{\pm4.5}$ & $1.92e^{14}$ & $57.3_{\pm4.5}$ & $4.17e^{14}$ & $53.2_{\pm1.7}$ & $1.68e^{14}$ & $55.3_{\pm7.5}$ & $2.34e^{14}$ & $48.9_{\pm3.6}$ & $6.55e^{14}$\\
BM25            & $77.8_{\pm1.9}$ & $1.92e^{14}$ & $55.7_{\pm2.8}$ & $4.17e^{14}$ & $58.3_{\pm2.5}$ & $1.68e^{14}$ & $42.3_{\pm1.1}$ & $2.34e^{14}$ & $51.2_{\pm0.6}$ & $6.55e^{14}$\\
Top-$k$         & $88.9_{\pm1.2}$ & $1.92e^{14}$ & $63.8_{\pm1.5}$ & $4.17e^{14}$ & $60.3_{\pm1.1}$ & $1.68e^{14}$ & $45.5_{\pm1.4}$ & $2.34e^{14}$ & $58.3_{\pm1.1}$ & $6.55e^{14}$\\
Top-$k$ + Alg.~\ref{alg_lcc}   & $88.7_{\pm0.8}$ & $9.14e^{12}$ & $63.4_{\pm1.0}$ & $2.17e^{13}$ & $60.3_{\pm1.1}$ & $7.51e^{12}$ & $45.4_{\pm0.8}$ & $9.56e^{12}$ & $58.0_{\pm1.3}$ & $2.72e^{13}$\\
\textsc{GradCE} & $83.5_{\pm0.7}$ & $6.71e^{13}$ & $70.3_{\pm0.9}$ & $1.47e^{14}$ & $64.5_{\pm2.1}$ & $5.80e^{13}$ & $68.2_{\pm0.8}$ & $7.80e^{13}$ & $65.4_{\pm1.3}$ & $2.61e^{14}$\\
BRIDGE          & $89.9_{\pm0.3}$ & $5.73e^{15}$ & $73.3_{\pm0.8}$ & $7.12e^{15}$ & $70.8_{\pm1.3}$ & $7.01e^{15}$ & $82.3_{\pm0.6}$ & $1.00e^{16}$ & $76.3_{\pm0.4}$ & $2.84e^{16}$\\
BRIDGE + Alg.~\ref{alg_lcc}    & $89.9_{\pm0.5}$ & $4.93e^{14}$ & $73.0_{\pm0.9}$ & $5.70e^{14}$ & $70.1_{\pm0.5}$ & $5.49e^{14}$ & $82.2_{\pm0.8}$ & $7.22e^{14}$ & $75.8_{\pm0.8}$ & $2.08e^{15}$\\\midrule
\textbf{Algorithm \ref{alg_aslcc}}  & $\mathbf{95.9_{\pm 0.7}}$ & $\mathbf{4.52e^{14}}$ & $\mathbf{78.6_{\pm 1.1}}$ & $\mathbf{5.23e^{14}}$ & $\mathbf{81.8_{\pm 1.2}}$ & $\mathbf{5.08e^{14}}$ & $\mathbf{85.3_{\pm1.2}}$ & $\mathbf{6.61e^{14}}$ & $\mathbf{83.6_{\pm0.6}}$ & $\mathbf{1.81e^{15}}$\\
\bottomrule
\end{tabular}}
\caption{We report the test accuracy (\%) and the computational cost (FLOPs) across five different kinds of datasets. We compare our approach with several existing demonstration selection methods, using $50$ demonstrations. We run each experiment with three random seeds to report the standard deviations.}\label{tab_demonstration_baseline}
\end{table*}

\section{Experiments}\label{sec_main_experiment}

We evaluate our method by addressing the following three questions. First, how accurately and efficiently does Algorithm~\ref{alg_lcc} approximate full-context inference compared with existing methods? Second, how effective is Algorithm~\ref{alg_aslcc} in terms of the accuracy and computational cost of downstream demonstration selection? Third, how well does the learned SSM scale to much longer contexts and transfer across different domains?

Through extensive experiments in both text and reasoning datasets, we provide positive answers to all three questions above.
Finally, we conduct detailed ablation studies to validate the design of our approach.

\subsection{Experiment Setup}

\textbf{Datasets.} We evaluate our approach across a diverse set of downstream tasks with different output formats. SST-2 and Poem Sentiment require categorical sentiment labels. Coin Flip and Edge Existence require binary decisions. Modular Addition requires numerical answers.

We further evaluate the accuracy of Algorithm~\ref{alg_lcc} on MMLU and GSM8K. MMLU covers knowledge-intensive multiple-choice questions across a broad range of academic and professional subjects, and GSM8K requires the model to generate multi-step reasoning processes and final numerical answers. Thus, our evaluation covers both short-output classification tasks and longer-form chain-of-thought reasoning tasks.

\paragraph{Baselines.} Regarding efficient inference methods, we consider full inference (dense), streaming LLM~\citep{xiao2024efficient}, LM-infinite~\citep{han2024lm}, Gist tokens~\citep{mu2023learning}, in-context autoencoder (ICAE)~\citep{ge2024context}, block-sparse attention (BSA)~\citep{xiao2025efficient}, and DuoAttention~\citep{xiao2025duoattention} as baselines.

Then, we consider baselines that use different measures to rank the demonstrations. These include random selection (Random-$k$), selection based on probabilistic relevance rankings (BM25), embedding similarities (top-$k$), Bayesian refinement, and iterative demonstration generation, for example (BRIDGE)~\citep{wan2025few}, cross-entropy-based selection with gradient estimation (\textsc{GradCE})~\citep{zhang2025linear}. We defer a detailed description of datasets, models, and baselines to Appendix~\ref{app_detailed_setup}.
All other implementation details are deferred to Appendix \ref{app_exp}.

\subsection{Experiment Results}

\paragraph{Accuracy and efficiency of Algorithm~\ref{alg_lcc}.} We evaluate our approach using the Qwen-3B model with $50$ demonstrations for MMLU and Modular Addition and $20$ for GSM8K. We compute the relative error of the output logits, the computational cost (measured in the number of FLOPs), and the peak memory usage for each baseline.

As shown in Table~\ref{tab_error_baseline_main}, Algorithm \ref{alg_lcc} reduces the inference cost by approximately two orders of magnitude compared to Dense inference. By compressing the global prefix into $16$ tokens, our method maintains a relative output error below $0.2\%$ across all tasks. Compared to the baseline BSA, our method yields an average error reduction of $2.17\%$ and a $35.5\%$ decrease in FLOPs, with minimal memory overhead. We defer the results on the remaining datasets and those using Qwen-1.5B to Table~\ref{tab_omitted_error_1.5B} and Table~\ref{tab_omitted_baseline_error_1.5B_3B}.

\paragraph{Accuracy and efficiency of Algorithm~\ref{alg_aslcc}.} We integrate our approach into existing selection frameworks to assess its compatibility and evaluate it across five datasets. We evaluate the accuracy and FLOPs of the whole pipeline in Algorithm~\ref{alg_aslcc}.

Using the Qwen-3B model with $50$ demonstrations, Table~\ref{tab_demonstration_baseline} shows that our approach accelerates existing pipelines, with an average accuracy drop of $0.22\%$ across other baselines. Compared to the baseline, BRIDGE, our approach yields an average accuracy improvement of $6.48\%$ while requiring much fewer FLOPs. We analyze the efficiency of Algorithm~\ref{alg_aslcc} in Appendix~\ref{app_omitted_results}.

\paragraph{Length and task generalization.}
To evaluate length generalization, we test the module, trained solely on $50$ demonstrations, on extended contexts containing up to $1000$ demonstrations. As shown in Figure~\ref{fig_k_generation}, our method maintains stable performance without additional fine-tuning. When increasing the number of demonstrations to $1000$, the relative output error remains below $1.8\%$ across all datasets.

\begin{figure}[t]
\centering
\begin{subfigure}{0.24\textwidth}
\centering
\includegraphics[width=0.99\textwidth]{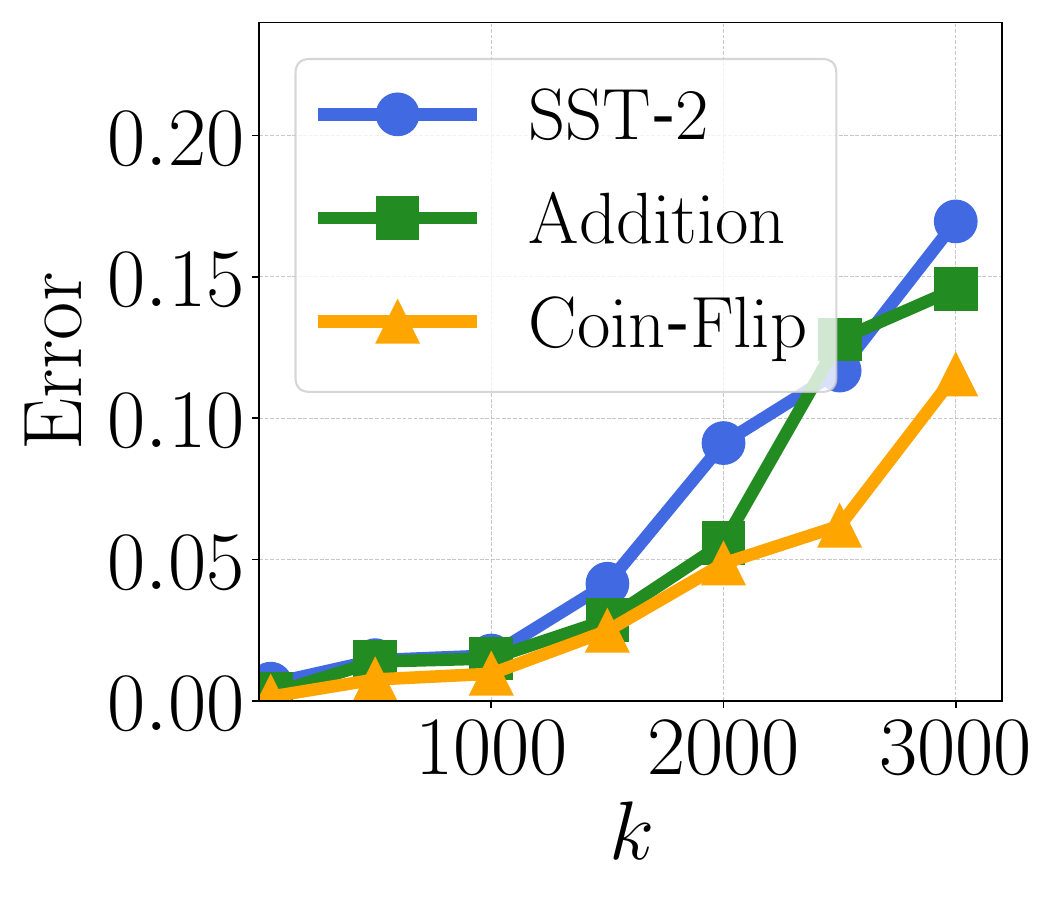}
\caption{Length error}
\label{fig_k_generation}
\end{subfigure}\hfill
\begin{subfigure}{0.24\textwidth}
\centering
\includegraphics[width=0.99\textwidth]{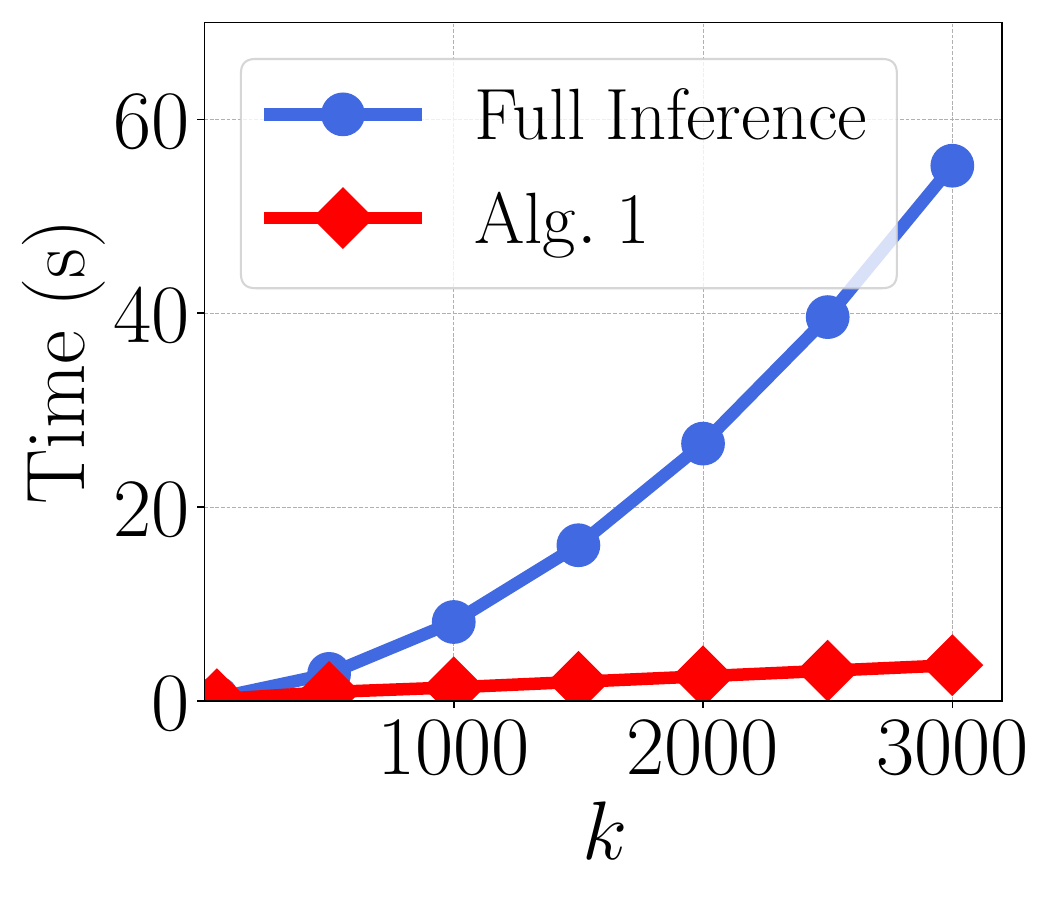}
\caption{Runtime scaling}
\label{fig_k_clock_time}
\end{subfigure}
\caption{Figure~\ref{fig_k_generation} reports the relative output error as the number of demonstrations $k$ increases. Figure~\ref{fig_k_clock_time} reports the wall-clock inference time as $k$ increases. 
}
\end{figure}

We further compare the wall-clock time for processing a single query with full inference and our approach. As shown in Table~\ref{tab_ablation} on the left, full inference retains the quadratic growth as the context becomes longer, while our approach scales approximately linearly.

Finally, we evaluate cross-task transfer across SST-2, MMLU, Modular Addition, and GSM8K. When adapting to a new task, we initialize it using the SSM trained on the source task and apply only the second-stage fine-tuning objective to the target task.

Table~\ref{tab_ablation} on the right reports the relative error for each source-target pair, where rows indicate the source training tasks and columns indicate the target evaluation tasks. Adaptation takes $65\%$ fewer GPU hours than full training, and the relative error is lower than $1.0\%$ for all task pairs.

\subsection{Ablation Studies}

We conduct ablation studies to examine the main components of our framework.

\paragraph{Number of groups.} First, we study the effects of the HiPPO state dimension and the number of SSM layer groups on SST-2. As shown in Table~\ref{tab_ablation}, increasing either value reduces the test loss, but the improvement gradually becomes smaller. In particular, the effect of increasing the HiPPO state dimension largely saturates at $512$.

\paragraph{Number of tokens.} Then, we study the effect of the number of tokens, $n_v$, using Qwen-3B with $50$ demonstrations. We vary $n_v$ from $2$ to $32$ and report the output mean squared error (MSE) and the accuracy drop relative to full-context inference. As shown in Table~\ref{tab_virtual_tokens}, increasing $n_v$ generally reduces both metrics. The results become stable at $n_v=16$. Thus, we use $n_v=16$ in the main experiments.

\begin{table}[t]
\begin{subfigure}{0.25\textwidth}
\centering
\includegraphics[width=0.98\textwidth]{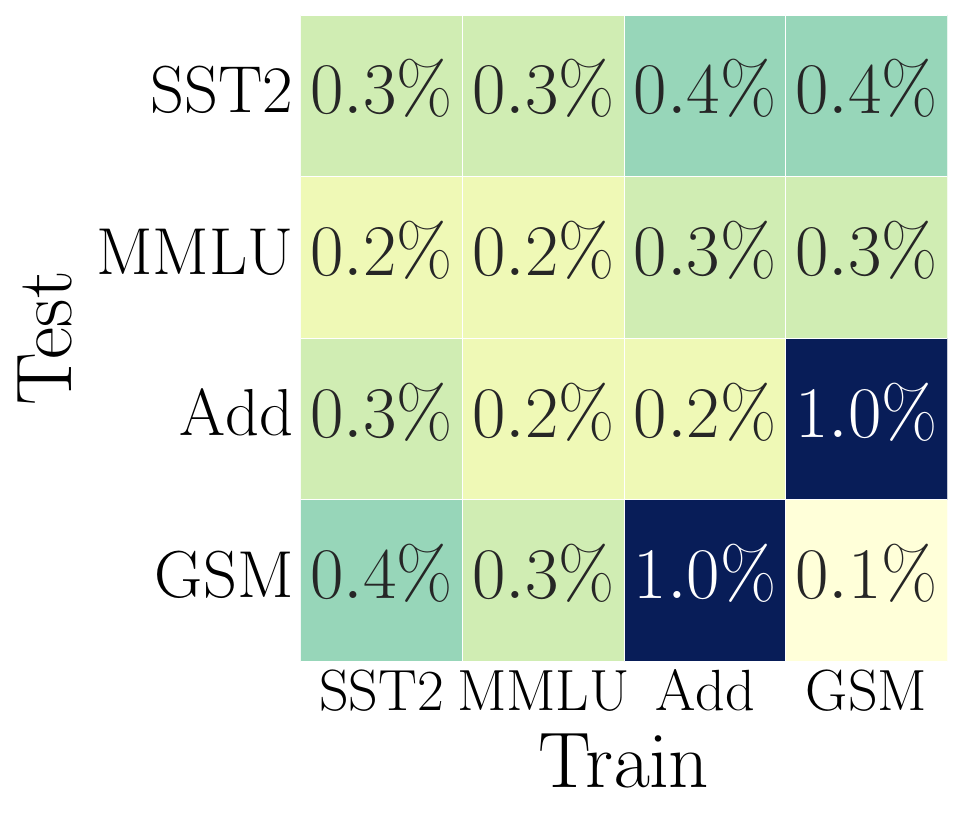}
\end{subfigure}\hfill
\begin{subfigure}{0.23\textwidth}
\centering
\includegraphics[width=0.95\textwidth]{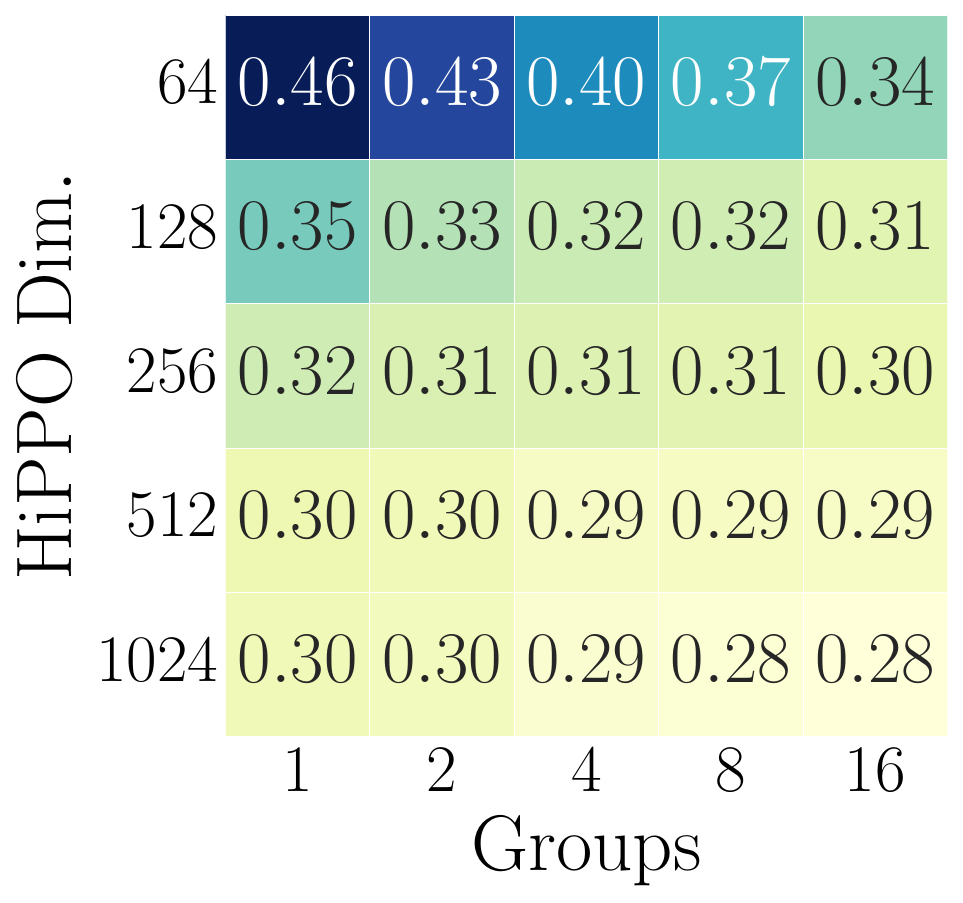}
\end{subfigure}
\caption{Left: cross-domain transfer error between training and test tasks. Right: test loss under different HiPPO state dimensions and numbers of layer groups.}\label{tab_ablation}
\end{table}

\paragraph{Empirical sample size scaling.} Finally, we study the empirical sample complexity of Algorithm~\ref{alg_aslcc}. First, we study the empirical sample complexity of the randomized ensemble method using linear functions~\citep{garg2022can}. Following the setting of~\citet{garg2022can}, we construct a mixed candidate pool containing demonstrations from two distributions, $\mathcal{D}_1$ and $\mathcal{D}_2$, while sampling all validation queries from $\mathcal{D}_1$. We vary both the number of demonstrations from $\mathcal{D}_1$ in each subset and the number of subsets sampled by the ensemble. The results show that each subset requires more than $20$ demonstrations from $\mathcal{D}_1$ to achieve near-zero validation loss. Moreover, sampling more than $50$ subsets produces a clear separation between the average affinity scores assigned to demonstrations from $\mathcal{D}_1$ and $\mathcal{D}_2$. Table~\ref{tab_num_subsets} in Appendix \ref{app_omitted_results} reports the full results.

\begin{table*}[t]
    \centering
    {\small
    \begin{tabular}{llccccc}
    \toprule
    Dataset & Metric & $n_v=2$ & $n_v=4$ & $n_v=16$ & $n_v=32$ \\
    \midrule
    \multirow{2}{*}{SST-2} & MSE & $0.027$ & $0.015$ & $0.006$ & $0.003$ \\ & Accuracy drop & $0.07$ & $0.05$ & $0.01$ & $0.01$ \\
    \midrule
    \multirow{2}{*}{Poem Sentiment} & MSE & $0.010$ & $0.006$ & $0.002$ & $0.002$ \\ & Accuracy drop & $0.06$ & $0.05$ & $0.04$ & $0.04$ \\
    \midrule
    \multirow{2}{*}{GSM8K} & MSE & $0.002$ & $0.001$ & $0.002$ & $0.001$ \\ & Accuracy drop & $0.02$ & $0.02$ & $0.01$ & $0.01$ \\
    \bottomrule
    \end{tabular}}
    \caption{Varying number of tokens using Qwen-3B with $50$ demonstrations. We report the output MSE and the accuracy drop relative to full-context inference on SST-2, Poem Sentiment, and GSM8K.}\label{tab_virtual_tokens}
\end{table*}

\begin{table*}[t!]
\centering
{\small
\begin{tabular}{@{}lcccc@{}}
\toprule
\# Subsets & $10$ & $50$ & $100$ & $200$ \\ 
\midrule
Samples from $\mathcal{D}_1$ & $0.22_{\pm0.04}$ & $0.55_{\pm0.03}$ & $0.57_{\pm0.04}$ & $0.56_{\pm0.03}$  \\
Samples from $\mathcal{D}_2$ & $0.21_{\pm0.01}$ & $0.52_{\pm0.04}$ & $0.51_{\pm0.06}$ & $0.48_{\pm0.04}$  \\
\bottomrule
\end{tabular}}
\caption{To illustrate the scoring mechanism in our random ensemble method, we report the average score of demonstration samples from $\mathcal{D}_1$ and $\mathcal{D}_2$, given the query set generated from $\mathcal{D}_1$. We vary the number of subsets and find that the in-distribution samples always achieve a higher score than out-of-distribution samples.}\label{tab_num_subsets}
\end{table*}

\section{Related Work}\label{sec_related}

\textbf{Long-context model inference.} Work on reducing transformer cost can be grouped into four lines. First, architectural methods use SSMs~\citep{gu2020hippo,gu2022efficiently} and hybrid mamba-transformer designs~\citep{gu2024mamba,ren2025samba,oncescu2025flash} to achieve linear-time long-sequence modeling. 
For long-convolution sequence models, the fast Fourier transform~\citep{oncescu2025flash} can be utilized to perform exact autoregressive inference efficiently.
Note that these approaches target alternative architectures and cannot be directly applied to transformer-based generative models.
The second line of work~\citep{xiao2024efficient,han2024lm} limits each query to a sliding window of recent tokens and a few fixed initial tokens that act as attention sinks at inference time. Third, memory-token methods map long contexts into learned slots~\citep{mu2023learning,chevalier2023adapting,ge2024context}, but they still need to run a transformer over the original input.
Finally, previous works~\cite{yang2024parallelizing,yang2026path,dao2022flashattention} use blockwise computation to improve hardware efficiency for long sequences. Given their insights, we apply the same general strategy to the fixed HiPPO recurrence by combining the state updates within each chunk into a single batched operation.

\paragraph{Demonstration selection and prompt optimization.} In-context learning relies heavily on the quality and structure of the provided demonstrations~\citep{min2022rethinking}. Traditional selection relies on static embedding retrieval, which ignores the model's actual output. To capture these combinatorial effects, recent methods incorporate model inference feedback. For instance, \citet{wan2025few} alternate between selecting influential demonstrations and using them to regenerate an improved many-shot prompt. Since running the full model on every candidate subset is expensive, \citet{zhang2025linear} estimate subset-level outputs from input gradients and reduce the cost of evaluating many subsets.

Another related line of work involves surrogate modeling \cite{li2023identification}, task attribution \cite{zhang2025scalable,zhang2026efficient}, and influence estimation. 
A theoretical analysis~\citep{yang2025precise} shows that the benefit of combining datasets depends jointly on task mismatch and source sample size. Adding more source data can even turn positive transfer into negative transfer.
\citet{li2023identification,li2024scalable} use surrogate modeling to efficiently identify negative transfer among data, and \citet{zhang2026efficient} use kernel surrogate modeling to estimate affinity. The same goal of estimating interactions among datasets has also been studied for large language models. Gradient-based methods predict the fine-tuning performance of different dataset combinations without training a separate model for each combination~\citep{li2024gradex}, and these estimates can then guide dataset grouping~\citep{li2025efficient}.
Broadly, estimated task affinity can guide not only dataset grouping but also how model components are shared across tasks~\citep{li2026efficiently}.

Beyond gradient-based estimation and heuristic scoring, selecting efficient candidate data points connects to the broader literature on experimental design and automated trajectory optimization from historical data \citep{xiong2024optimal, xiong2024data, xiong2025automated, zhang2025scalable}. It would be interesting to connect ideas from this paper to automated trajectory optimization from historical data.

\section{Conclusion}\label{sec_conclude}
We address the computational bottleneck of demonstration selection with SSMs, where many candidate subsets require repeated transformer inference. We train the SSMs to estimate the transformer output for different demonstration subsets. It reduce the inference complexity to linear. Then, we introduce a random selection method that compares many subsets and identifies useful demonstrations based on the SSMs inference output.

\section*{Acknowledgment}

Thanks to the anonymous reviewers and the area chair for their constructive feedback.
The work of Ziniu Zhang and Zhenshuo Zhang is partially funded by NSF award IIS-2412008, a startup fund from Northeastern University, and Khoury PhD fellowships.

%% file: appendix.tex
\section{Proof of Proposition \ref{prop_compression}}\label{app_group_error}

\begin{proof}
For completeness, the HiPPO-LegS transition matrix used by each layer group is defined as
\begin{equation}
\bar{A}_{p,q}=
\begin{cases}
-\sqrt{(2p+1)(2q+1)}, & \text{if } p>q,\\
-(p+1), & \text{if } p=q,\\
0, & \text{if } p<q,
\end{cases}
\end{equation}

Let $\{\phi_r\}_{r=1}^{T}\subset\mathbb{R}^{T}$ denote an orthonormal discrete Legendre basis for the token dimension, ordered by increasing polynomial degree, so
\[
\phi_r^\top\phi_{r'}=\mathbbm{1}\{r=r'\}.
\]
For each layer group $G_i$, write $C^{(i)}\in\mathbb{R}^{T\times D_i}$, where the rows correspond to token positions and the columns contain the key and value states from all layers and key-value heads in the group. The vector of the $r$-th HiPPO-LegS coefficients is
\[
c_r^{(i)}
:=
\bigl(C^{(i)}\bigr)^\top\phi_r
\in\mathbb{R}^{D_i}.
\]
Therefore,
\[
C^{(i)}
=
\sum_{r=1}^{T}
\phi_r\bigl(c_r^{(i)}\bigr)^\top.
\]
Since $\Pi_N$ projects along the token dimension onto the first $N$ basis functions, $\Pi_N C^{(i)}=\sum_{r=1}^{N}\phi_r\bigl(c_r^{(i)}\bigr)^\top$.
The omitted part of the cache is thus
\[
C^{(i)}-\Pi_N C^{(i)}
=
\sum_{r=N+1}^{T}
\phi_r\bigl(c_r^{(i)}\bigr)^\top.
\]
By orthonormality of the basis,
\begin{equation}
\left\|
C^{(i)}-\Pi_N C^{(i)}
\right\|_F^2
=
\sum_{r=N+1}^{T}
\left\|c_r^{(i)}\right\|_2^2.
\label{eq:hippo_tail_energy}
\end{equation}
Applying this inequality to each term in
\eqref{eq:hippo_tail_energy} gives
\[
\begin{aligned}
&\left\|
C^{(i)}-\Pi_N C^{(i)}
\right\|_F^2 \\
\le&
\frac{1}{(N+1)^{2s}}
\sum_{r=N+1}^{T}
r^{2s}
\left\|c_r^{(i)}\right\|_2^2,
\end{aligned}
\]
since $\frac{r^{2s}}{(N+1)^{2s}} \ge 1$ for $r\ge N+1$ and $s>0$.
\end{proof}

\section{Omitted Experiments}\label{app_exp}

\subsection{Implementation Setup}
\label{app_detailed_setup}
\textit{Datasets.} We evaluate our method on datasets covering several task types, including language understanding, algorithmic reasoning, arithmetic reasoning, and graph reasoning. The number of demonstrations within each task is $1500$.

The \href{https://huggingface.co/datasets/nyu-mll/glue/viewer/sst2}{SST-2} dataset is a binary sentiment classification benchmark from \href{https://huggingface.co/datasets/nyu-mll/glue}{GLUE}, consisting of movie reviews labeled as positive or negative.

The \href{https://huggingface.co/datasets/google-research-datasets/poem_sentiment}{Poem Sentiment} dataset contains sentiment labels for lines of poetry, with three classes: positive, neutral, and negative. This dataset is used to evaluate sentiment understanding in literary text.

Modular Addition is an arithmetic reasoning task in which the model is given two integers $a,b \in \{0,\ldots,P-1\}$ for a prime $P$ and is asked to predict their sum modulo $P$, that is, to predict $c$ such that $a+b \equiv c \pmod{P}$.

The \href{https://huggingface.co/datasets/skrishna/coin_flip}{Coin Flip} dataset is a reasoning task in which the model is given a natural language description of a sequence of fair coin flips and must predict the final outcome, either heads or tails.

The Edge Existence dataset from \href{https://github.com/google-research/google-research/tree/master/graphqa}{GraphQA} is a graph reasoning task in which the model is given an undirected graph and must determine whether a specified edge is present in the graph.

The \href{https://huggingface.co/datasets/cais/mmlu}{MMLU} dataset is a multitask benchmark covering a wide range of subjects, including humanities, social sciences, and STEM. It evaluates a model’s ability to perform multiple-choice question answering across diverse domains. We use the college-level tasks in the evaluation.

The \href{https://huggingface.co/datasets/openai/gsm8k}{GSM8K} dataset is a grade-school math reasoning benchmark consisting of natural language word problems. Each example requires multi-step arithmetic reasoning, and the model is asked to produce the final numerical answer. We use this dataset to evaluate whether Algorithm~\ref{alg_lcc} preserves the reasoning information needed for long-context mathematical inference.

\textit{Models.} Our method is a model-agnostic method. In our experiments, we evaluate it on a diverse set of open-source language models that vary in scale and architecture, including \href{https://huggingface.co/Qwen/Qwen2.5-1.5B-Instruct}{Qwen2.5-1.5B-Instruct}, \href{https://huggingface.co/Qwen/Qwen2.5-3B-Instruct}{Qwen2.5-3B-Instruct}, \href{https://huggingface.co/Qwen/Qwen2.5-7B-Instruct}{Qwen2.5-7B-Instruct}, and \href{https://huggingface.co/meta-llama/Llama-3.1-8B-Instruct}{Llama-3.1-8B-Instruct}. This setup allows us to assess the effectiveness and scalability of our method across models with different capacities, training paradigms, and usage settings.

\textit{Training budget.} We train our method on a single NVIDIA RTX A6000 GPU. We set the number of virtual tokens to $16$, the SSM state dimension to $512$, and the number of layer groups to $4$. Training for $1$ epoch in phase one and $2$ epochs in phase two requires $1.15$ GPU hours.

\textit{Baselines.} We compare Algorithm~\ref{alg_lcc} against full inference, StreamingLLM, LM-Infinite, Block Sparse Attention baselines, and DuoAttention. Then, we compare Algorithm~\ref{alg_aslcc} against baselines including BM25, top-$k$, BRIDGE, \textsc{GradCE}.

Full Attention serves as our baseline. Every token attends to all preceding tokens, retaining the complete Key-Value cache across all layers and heads. While preserving maximum model capability, its linear memory scaling and quadratic pre-filling cost become prohibitively expensive for large demonstration pools.

StreamingLLM \citep{xiao2024efficient} exploits the phenomenon of attention sinks, where initial tokens receive disproportionately high attention scores regardless of semantic relevance. It maintains a fixed-size KV cache comprising only these initial sink tokens and a sliding window of recent tokens, evicting all intermediate context. This achieves constant memory usage but inherently degrades performance on tasks requiring fact retrieval from the middle of the input.

LM-Infinite \citep{han2024lm} addresses length generalization beyond the model's pre-training window. To prevent attention degradation from out-of-distribution position indices, it discards tokens outside a prescribed local window and applies a $\Lambda$-shaped position remapping. This aligns the remaining tokens' indices with the training distribution, enabling infinite context processing without fine-tuning. However, similar to StreamingLLM, it suffers from information loss by permanently evicting middle-context tokens.

DuoAttention \citep{xiao2025duoattention} categorizes attention heads into retrieval heads (attending to arbitrary positions) and streaming heads (attending to sinks and recent tokens). During a lightweight identification phase, a trainable gate $\alpha_{i,j}$ is assigned to each KV head and optimized via synthetic passkey-retrieval data. This minimizes output deviation from Full Attention under an $\ell_1$ sparsity penalty. At deployment, binarized gates selectively apply the StreamingLLM eviction policy exclusively to streaming heads, while retrieval heads retain the complete KV cache. This selectively reduces memory and latency while preserving long-range retrieval capabilities.

ICAE~\citep{ge2024context} compresses the input context into a small number of learned memory tokens through an autoencoding objective. It first encodes the original context into compact latent representations, and then conditions the language model on these memory tokens to recover the information needed for downstream generation. 

Implicit In-context Learning (I2CL)~\citep{li2025implicit} moves demonstrations from the token space into the model's activation space. It extracts a vector from each demonstration, combines these vectors in a way that is independent of their order, and injects the resulting context vector into the model's residual streams during inference. I2CL provides few-shot performance with inference cost close to that of zero-shot prediction. However, constructing its context vector still requires Transformer forward passes over the demonstrations. 

The top-$k$ method selects the $k$ most similar candidates based on feature similarity, given a test input. We compute the cosine similarity in the top-$k$ range between the last-layer hidden representations of queries and demonstration examples. The BM25 method utilizes a term frequency-based ranking function to retrieve the top-$k$ candidates whose input texts are most relevant to the query, with a focus on lexical overlap rather than embedding-level similarity.
\textsc{GradCE}~\citep{zhang2025linear} first filters candidates by embedding similarity, then selects a random anchor and performs a single forward and backward pass on all validation samples to precompute the loss and gradient with respect to the input embedding at that anchor. For each remaining candidate, it computes the embedding difference from the anchor and takes the inner product with the precomputed gradient to approximate the candidate's validation loss via a first-order Taylor expansion, selecting the $k$ candidates with the lowest estimated loss. BRIDGE~\citep{wan2025few} iteratively alternates between using Bayesian optimization to select a small high-performing subset of ICL demonstrations from the candidate pool ("optimize") and using that subset as seed examples to re-generate reasoning paths on the training set back to the many-shot regime ("generate"), repeating this process over multiple rounds to progressively improve demonstration quality.

\begin{table*}[t]
\centering
{\small
\begin{tabular}{@{}llcccc@{}}
\toprule
Model & \# Demonstrations & SST-2 & CR & Poem Sentiment & Modular Addition \\ \midrule
\multirow{1}{*}{Qwen-1.5B} & $k=25$ & $2.3_{\pm0.2}\times10^{-3}$ & $1.9_{\pm0.3}\times10^{-3}$ & $8.9_{\pm2.5}\times10^{-4}$ & $1.1_{\pm0.1}\times10^{-3}$ \\\midrule
\multirow{1}{*}{Qwen-3B} & $k=25$ & $5.3_{\pm0.2}\times10^{-3}$ & $3.2_{\pm0.3}\times10^{-3}$ & $2.0_{\pm2.5}\times10^{-3}$ & $3.5_{\pm0.1}\times10^{-3}$ \\
\bottomrule
\end{tabular}}
\caption{Relative error of the predictive output between the full demonstration prefix $p$ and the compressed virtual tokens $\hat p$. Evaluations are reported across multiple downstream tasks for varying demonstration counts. Lower values indicate higher fidelity to the original uncompressed model output. We run three times to compute the mean and standard deviations.}\label{tab_error_k=25}
\end{table*}

\subsection{Efficient Implementation}

Previous work~\cite{xiao2024efficient} shows that standard LLM inherently relies on initial tokens as ``attention sinks'' to stabilize the generation process. Thus, we explicitly retain a small number of exact initial KV states as sink tokens from the raw prefix. These states are concatenated with our generated virtual tokens to form the compressed context. 

The standard SSM recurrence processes tokens sequentially:
\begin{equation}
    s_t = A s_{t-1} + u_t,  u_t = B x_t, s_0 = 0,
\end{equation}
where $x_t \in \mathbb{R}^H$ is the input embedding at step $t$, $B \in \mathbb{R}^{D \times H}$ is a learned input projection, $A \in \mathbb{R}^{D \times D}$ is the frozen discretized HiPPO matrix, and $s_t \in \mathbb{R}^D$ is the hidden state. A naive implementation requires $T$ sequential kernel launches, which may under utilize the GPU.

We observe that unrolling the recurrence over $C$ consecutive steps yields a closed-form expression. Starting from an incoming state $s_{\mathrm{in}}$, the state after processing inputs $u_0, u_1, \ldots, u_{C-1}$ is:
\begin{equation}\label{eq_chunk-closed-form}
    s_{\mathrm{out}} = A^{C}  s_{\mathrm{in}} + \sum_{t=0}^{C-1} A^{C-1-t} u_t.
\end{equation}
The summation can be rewritten as a single batched contraction. Define the weight matrices $W_t = A^{C-1-t}$ for $t = 0, \ldots, C{-}1$, so that $W_0 = A^{C-1}$ (oldest input, most decay) and $W_{C-1} = I$ (newest input, no decay). Stacking these into a tensor $\mathbf{W} \in \mathbb{R}^{C \times D \times D}$ and the chunk inputs into $\mathbf{U} \in \mathbb{R}^{C \times D}$, the entire summation reduces to:
\begin{equation}
    \sum_{t=0}^{C-1} W_t u_t = \sum_{t=0}^{C-1} \mathbf{W}[t] \mathbf{U}[t],
\end{equation}
which is evaluated as a single \texttt{einsum} operation on the GPU.

We partition the full prefix of length $T$ into $\lceil T/C \rceil$ contiguous chunks. Processing proceeds as follows: for each chunk $k = 0, 1, \ldots$, we first compute the powers $\{A^{C-1}, A^{C-2}, \ldots, I\}$ iteratively using $C$ matrix multiplications of size $D \times D$. Then, we evaluate the within-chunk contribution via the batched contraction. Finally, we advance the inter-chunk state as $s_{\mathrm{in}}^{(k+1)} = A^C \, s_{\mathrm{in}}^{(k)} + \text{contrib}^{(k)}$. Since $A$ is frozen, the powers are computed on the fly and discarded after each chunk to avoid storing a $C \times D \times D$ buffer.

\begin{table*}[ht!]
    \centering
    {\small\begin{tabular}{lccccc}
    \toprule
    Approach & SST-2 & Poem Sentiment & Addition & Coin Flip & Edge Existence \\
    \midrule
    Algorithm~\ref{alg_aslcc} & $95.9_{\pm 0.7}$ & $78.6_{\pm 1.1}$ & $81.8_{\pm 1.2}$ & $85.3_{\pm 1.2}$ & $83.6_{\pm 0.6}$ \\
    Algorithm~\ref{alg_aslcc} without context compression & $95.0_{\pm 0.5}$ & $79.7_{\pm 0.8}$ & $82.8_{\pm 1.0}$ & $85.3_{\pm 1.2}$ & $83.7_{\pm 1.3}$ \\
    \bottomrule
    \end{tabular}}
    \caption{Test accuracy (\%) of Algorithm~\ref{alg_aslcc} with and without context compression. Each result reports the mean and standard deviation over three random seeds.}\label{tab_selection_wo_compression}
\end{table*}

\begin{table*}[t]
\centering
\resizebox{1.0\textwidth}{!}
{\small
\begin{tabular}{@{}l|c|ccc|ccc|ccc@{}}
\toprule
\multirow{2}{*}{Approach} & Window & \multicolumn{3}{c|}{MMLU} & \multicolumn{3}{c|}{Modular Addition} & \multicolumn{3}{c}{GSM8K} \\ 
& Length & Error & FLOPs & Memory & Error & FLOPs & Memory & Error & FLOPs & Memory \\\midrule
\multirow{2}{*}{StreamingLLM}   
& $32$ & $2.5\%$ & $7.72e^{13}$ & $15.88$G  & $10.7\%$ & $1.90e^{13}$ & $13.95$G & $6.4\%$ & $1.25e^{14}$ & $14.05$G \\
                                & $128$ & $0.7\%$ & $7.75e^{13}$ & $15.95$G & $8.7\%$ & $1.91e^{13}$ & $13.96$G  & $5.1\%$ & $1.26e^{14}$ & $14.07$G \\\midrule
\multirow{2}{*}{LM-Infinite}     & $32$ & $11.6\%$ & $7.73e^{13}$ & $15.90$G  & $14.3\%$ & $1.90e^{13}$ & $12.96$G & $5.7\%$ & $1.26e^{14}$ & $14.06$G \\
                                & $128$ & $5.1\%$ & $7.75e^{13}$ & $15.98$G  & $8.7\%$ & $1.91e^{13}$ & $12.99$G & $4.9\%$ & $1.27e^{14}$ & $14.07$G \\
\bottomrule
\end{tabular}}
\caption{We report the results using more window lengths on MMLU, Addition datasets, and GSM8K with $k=50$. The relative error is computed against actual inference results.}\label{tab_error_baseline}
\end{table*}

This reduces the number of sequential state updates from $T$ to $\lceil T/C \rceil$, while each chunk-level operation is a single, well-parallelised GPU kernel. The underlying SSM dynamics are unchanged---the final state $s_T$ is mathematically identical to that of the token-by-token scan.

During training, we observe that directly optimizing the distillation objectives (logits and hidden states) without prior initialization degrades performance. The root cause is a bootstrap problem: randomly initialized virtual KV tokens produce degenerate attention patterns in the frozen Transformer, which in turn yield near-uniform output distributions. The resulting logit-level and hidden-state gradients carry little useful signal for the auxiliary, preventing meaningful learning. The first-stage KV alignment resolves this by constraining the virtual tokens to lie in a geometrically valid region of the KV representation space before any query-conditioned supervision is applied. %

\subsection{Omitted Experiment Results}\label{app_omitted_results}

We illustrate additional evaluation results across different model scales and sequence lengths to supplement the findings in the main text.

\paragraph{Sensitivity to small numbers of demonstrations.}
To verify on shorter context lengths, Table~\ref{tab_error_k=25} extends the relative output error analysis to the context window of $k=25$ demonstrations. Across two different models (Qwen-1.5B, Qwen-3B) and four downstream tasks, the relative error between the full demonstration prefix and our compressed virtual tokens remains below $0.6\%$. These results confirm that Algorithm~\ref{alg_lcc} consistently preserves the original predictive distribution regardless of the context scale or model architecture.

\paragraph{Results on smaller models.}
To test whether Algorithm~\ref{alg_lcc} maintains its advantage at a smaller model scale, Table~\ref{tab_omitted_error_1.5B} details the computational cost, memory usage, and relative error for the Qwen-1.5B-Instruct model on the SST-2, Poem Sentiment, and Modular Addition datasets. Under the standard setting of $k=50$ demonstrations, Algorithm~\ref{alg_lcc} achieves the lowest relative error among all evaluated methods, including $0.3\%$ on SST-2 and $0.1\%$ on both Poem Sentiment and Addition. Furthermore, Algorithm~\ref{alg_lcc} requires fewer FLOPs than the leading baseline, BSA, while maintaining a comparable memory footprint. This verifies that the efficiency and accuracy advantages of the Algorithm~\ref{alg_lcc} module generalize reliably to smaller model variants.

\paragraph{Coverage of remaining task categories. }
To confirm the extension to other reasoning tasks, Table~\ref{tab_omitted_error_1.5B} and Table~\ref{tab_omitted_baseline_error_1.5B_3B} present the evaluation on the remaining reasoning datasets, Coin Flip and Edge Existence, using both the Qwen-1.5B-Instruct and Qwen-3B-Instruct models with $k=50$ demonstrations. Consistent with earlier findings, Algorithm~\ref{alg_lcc} achieves the lowest relative error among all baselines, recording $0.1\%$ error on both datasets for the 1.5B model, and under $0.9\%$ for the 3B model. This demonstrates that Algorithm~\ref{alg_lcc} preserves strong predictive fidelity across diverse reasoning tasks while maintaining a highly efficient computational profile compared to sliding-window and block-sparse baselines.

\paragraph{Results regarding Algorithm~\ref{alg_aslcc}.}
To separate the effect of the selection algorithm from that of context compression, in Table~\ref{tab_selection_wo_compression}, we find that removing context compression changes the average accuracy from $85.04\%$ to $85.30\%$, corresponding to an increase of $0.26$ percentage points. This result shows that the accuracy gain cannot be explained by the compressed representation acting as a regularizer. The selection algorithm remains effective when candidates are evaluated using the full context.

\begin{table*}[t]
\centering
\resizebox{1.0\textwidth}{!}
{\small
\begin{tabular}{@{}l|c|ccc|ccc|ccc@{}}
\toprule
\multirow{2}{*}{Method} & Window & \multicolumn{3}{c|}{SST-2} & \multicolumn{3}{c|}{Poem Sentiment} & \multicolumn{3}{c}{Modular Addition} \\ 
& Length & Error & FLOPs & Memory & Error & FLOPs & Memory & Error & FLOPs & Memory \\\midrule
\multirow{3}{*}{StreamingLLM}   & $32$ & $11.3\%$ & $9.42e^{12}$ & $6.94$G & $17.5\%$ & $1.36e^{13}$ & $6.61$G & $12.8\%$ & $9.53e^{12}$ & $7.21$G \\
                                & $64$ & $7.6\%$ & $9.43e^{12}$ & $6.94$G & $13.2\%$ & $1.36e^{13}$ & $6.62$G & $11.9\%$ & $9.54e^{12}$ & $7.22$G \\
                                & $128$ & $5.2\%$ & $9.46e^{12}$ & $6.96$G & $8.7\%$ & $1.37e^{13}$ & $6.64$G & $8.4\%$ & $9.56e^{12}$ & $7.24$G \\\midrule
\multirow{3}{*}{LM-Infinite}    & $32$ & $16.2\%$ & $9.36e^{12}$ & $6.93$G & $7.0\%$ & $1.37e^{13}$ & $6.61$G & $5.5\%$ & $9.41e^{12}$ & $7.21$G \\
                                & $64$ & $11.5\%$ & $9.38e^{12}$ & $6.94$G & $3.7\%$ & $1.37e^{13}$ & $6.61$G & $3.5\%$ & $9.44e^{12}$ & $7.21$G \\
                                & $128$ & $11.0\%$ & $9.41e^{12}$ & $6.96$G & $2.9\%$ & $1.37e^{13}$ & $6.64$G & $3.2\%$ & $9.50e^{12}$ & $7.23$G \\\midrule
 BSA & $\sim400$ & $1.1\%$ & $9.52e^{12}$ & $7.15$G & $0.7\%$ & $1.53e^{13}$ & $6.96$G & $0.9\%$ & $8.81e^{12}$ & $7.78$G \\\midrule
 Algorithm~\ref{alg_lcc} & $\mathbf{16}$ & $\mathbf{0.3\%}$ & $\mathbf{6.30e^{12}}$ & $\mathbf{7.19}$G & $\mathbf{0.1\%}$ & $\mathbf{1.02e^{13}}$ & $\mathbf{7.06}$G & $\mathbf{0.1\%}$ & $\mathbf{5.28e^{12}}$ & $\mathbf{8.06}$G \\
\midrule
\multirow{2}{*}{Method} & Window & \multicolumn{3}{c|}{Coin Flip} & \multicolumn{3}{c|}{Edge Existence} & \multicolumn{3}{c}{MMLU} \\ 
& Length & Error & FLOPs & Memory & Error & FLOPs & Memory & Error & FLOPs & Memory \\\midrule
\multirow{3}{*}{StreamingLLM}   & $32$ & $18.1\%$ & $1.30e^{13}$ & $7.83$G & $3.8\%$ & $5.99e^{13}$ & $14.63$G & $3.2\%$ & $3.88e^{13}$ & $10.06$G \\
                                & $64$ & $10.2\%$ & $1.30e^{13}$ & $7.84$G & $2.7\%$ & $6.01e^{13}$ & $14.64$G & $0.8\%$ & $3.88e^{13}$ & $10.08$G  \\
                                & $128$ & $9.8\%$ & $1.31e^{13}$ & $7.85$G & $2.5\%$ & $6.03e^{13}$ & $14.65$G & $0.6\%$ & $3.89e^{13}$ & $10.13$G \\\midrule
\multirow{3}{*}{LM-Infinite}    & $32$ & $28.7\%$ & $1.32e^{13}$ & $7.84$G & $7.5\%$ & $6.43e^{13}$ & $14.63$G & $1.4\%$ & $3.88e^{13}$ & $10.07$G \\
                                & $64$ & $26.8\%$ & $1.32e^{13}$ & $7.84$G & $6.5\%$ & $6.43e^{13}$ & $14.64$G & $0.7\%$ & $3.89e^{13}$ & $10.08$G \\
                                & $128$ & $23.9\%$ & $1.33e^{13}$ & $7.86$G & $4.6\%$ & $6.46e^{13}$ & $14.66$G & $0.5\%$ & $3.89e^{13}$ & $10.13$G \\\midrule
 BSA                            & $\sim400$ & $0.7\%$ & $1.35e^{13}$ & $8.28$G & $0.9\%$ & $6.88e^{13}$ & $15.77$G & $0.5\%$ & $3.95e^{13}$ & $10.36$G \\\midrule
 \textbf{Algorithm~\ref{alg_lcc}} & $\mathbf{16}$ & $\mathbf{0.1\%}$ & $\mathbf{9.02e^{12}}$ & $\mathbf{8.35}$G & $\mathbf{0.1\%}$ & $\mathbf{4.11e^{13}}$ & $\mathbf{14.73}$G & $\mathbf{0.3\%}$ & $\mathbf{2.49e^{13}}$ & $\mathbf{10.81}$G \\
\bottomrule
\end{tabular}}
\caption{Supplementary results on Qwen-1.5B-Instruct. We compare long-context inference methods on SST-2, Poem Sentiment, Modular Addition, Coin Flip, Edge Existence, and MMLU with $k=50$ demonstrations, and report relative output error with respect to full-prefix inference, together with FLOPs and peak memory usage.}\label{tab_omitted_error_1.5B}
\end{table*}

\begin{table*}[t]
\centering
\resizebox{1.0\textwidth}{!}
{\small
\begin{tabular}{@{}l|c|ccc|ccc|ccc@{}}
\toprule
\multirow{2}{*}{Method} & Window & \multicolumn{3}{c|}{Coin Flip} & \multicolumn{3}{c|}{Edge Existence} & \multicolumn{3}{c}{Poem Sentiment} \\ 
& Length & Error & FLOPs & Memory & Error & FLOPs & Memory & Error & FLOPs & Memory \\\midrule
Dense & - & - & $1.49e^{15}$ & $14.58$G & - & $9.77e^{15}$ & $24.57$G & - & $2.53e^{15}$ & $12.70$G \\\midrule
\multirow{2}{*}{StreamingLLM}   & $64$ & $31.5\%$ & $2.63e^{13}$ & $13.59$G & $1.9\%$ & $1.28e^{14}$ & $20.40$G & $9.3\%$ & $2.72e^{13}$ & $12.37$G \\
                                & $128$ & $30.6\%$ & $2.64e^{13}$ & $13.61$G & $1.8\%$ & $1.28e^{14}$ & $20.42$G &  $9.1\%$ & $2.73e^{13}$ & $12.39$G \\\midrule
\multirow{2}{*}{LM-Infinite}    & $64$ & $32.8\%$ & $2.64e^{13}$ & $13.60$G & $1.3\%$ & $1.28e^{14}$ & $20.40$G & $14.2\%$ & $2.72e^{13}$ & $12.38$G \\
                                & $128$ & $30.6\%$ & $2.64e^{13}$ & $13.61$G & $1.2\%$ & $1.28e^{14}$ & $20.43$G & $13.8\%$ & $2.73e^{13}$ & $12.40$G \\\midrule
 ICAE                           & $128$ & $1.4\%$ & $1.36e^{14}$ & $14.27$G & $3.5\%$ & $1.02e^{15}$ & $24.76$G & $3.2\%$ & $5.40e^{13}$ & $12.32$G \\\midrule
 DuoAttention                  & $64$ & $10.5\%$ & $2.70e^{13}$ & $14.02$G & $3.2\%$ & $1.40e^{14}$ & $22.59$G &  $7.6\%$ & $2.75e^{13}$ & $12.50$G \\\midrule
 BSA                            & $\sim400$ & $2.4\%$ & $2.75e^{13}$ & $14.07$G & $1.4\%$ & $1.53e^{14}$ & $24.64$G &  $4.2\%$ & $2.95e^{13}$ & $12.85$G \\\midrule
 \textbf{Algorithm~\ref{alg_lcc}} & $\mathbf{16}$ & $\mathbf{0.7\%}$ & $\mathbf{1.91e^{13}}$ & $\mathbf{14.13}$G & $\mathbf{0.8\%}$ & $\mathbf{9.97e^{13}}$ & $\mathbf{25.02}$G & $\mathbf{1.6\%}$ & $\mathbf{2.37e^{13}}$ & $\mathbf{13.23}$G\\
\bottomrule
\end{tabular}}
\caption{Supplementary results on Qwen-3B-Instruct. We compare long-context inference methods on the Coin Flip, Edge Existence, and Poem Sentiment tasks with $k=50$ demonstrations, and report relative output error with respect to full-prefix inference, together with FLOPs and peak memory usage.}\label{tab_omitted_baseline_error_1.5B_3B}
\end{table*}